\newtheorem{theorem}{Theorem}[section]
\newtheorem{lemma}[theorem]{Lemma}
\newtheorem{remark}[theorem]{Remark}
\newtheorem{example}[theorem]{Example}
\newtheorem{terminology}[theorem]{Terminology}
\newtheorem{counter-example}[theorem]{Counter example}
\newtheorem{assumption}[theorem]{Assumption}
\newtheorem{open question}[theorem]{Open question}
\newtheorem{corollary}[theorem]{Corollary}
\newcommand{\ignore}[1]{}
\newcommand{\ca}{{\cal A}}
\newcommand{\cd}{{\cal D}}
\newcommand{\cg}{{\cal G}}
\newcommand{\ch}{{\cal H}}
\newcommand{\cl}{{\cal L}}
\newcommand{\cx}{{\cal X}}
\newcommand{\cz}{{\cal Z}}
\newcommand{\csp}{\mathrm{CSP}}
\newcommand{\auto}{\mathrm{AUTO}}
\newcommand{\rand}{\mathrm{rand}}
\newcommand{\sat}{\mathrm{SAT}}
\newcommand{\dnf}{\mathrm{DNF}}
\newcommand{\val}{\mathrm{VAL}}
\newcommand{\proof}{{\par\noindent {\bf Proof}\space\space}}
\newcommand{\proofbox}{\hfill $\Box$}
\DeclareMathOperator{\Err}{Err}
\DeclareMathOperator{\poly}{poly}
\title{Complexity theoretic limitations on learning DNF's}
\author{Amit Daniely\thanks{Dept. of Mathematics, The Hebrew University, Jerusalem, Israel} \hspace{1cm}Shai Shalev-Shwartz\thanks{School of Computer Science and Engineering, The Hebrew University, Jerusalem, Israel} 
}
\begin{document}
\maketitle
\setcounter{page}{0}

\thispagestyle{empty}
\maketitle

\begin{abstract}
Using the recently developed framework of \cite{daniely2013average}, we show that under a natural assumption on the complexity of refuting random K-SAT
formulas, learning DNF formulas is hard. Furthermore, the same assumption implies the hardness of learning intersections of $\omega(\log(n))$ halfspaces, agnostically learning conjunctions, as well as 	virtually all (distribution free) learning problems that were previously shown hard (under complexity assumptions).
\end{abstract}

\newpage

\section{Introduction}
In the PAC learning model \cite{Valiant84}, a learner is
given an oracle access to randomly generated
samples $(X,Y)\in \cx\times\{0,1\}$ where $X$ is sampled from some
{\em unknown} distribution $\cd$ on $\cx$ and $Y=h^{*}(X)$ for some
{\em unknown} $h^{*} : \cx \to \{0,1\}$. It is assumed that $h^*$
comes from a predefined \emph{hypothesis class} $\ch$, consisting of $0,1$ valued
functions on $\cx$. The learning problem defined by $\ch$ is to find $h:\cx\to\{0,1\}$ that minimizes
$\Err_{\cd}(h):=\Pr_{X\sim\cd}(h(X)\not=h^*(X))$. For concreteness, we take
$\cx=\{\pm 1\}^n$, and say that the learning problem is tractable if there is an algorithm that on input
$\epsilon$, runs in time $\poly(n,1/\epsilon)$ and outputs, w.h.p., a
hypothesis $h$ with $\Err(h)\le\epsilon$.

Assuming $\mathbf{P}\ne\mathbf{NP}$, the status of most basic {\em
  computational} problems is fairly well understood. In a sharp
contrast, $30$ years after Valiant's paper, the status of most basic
{\em learning} problems is still wide open -- there is a huge gap
between the performance of best known algorithms and hardness
results (see \cite{daniely2013average}).  The main obstacle is the
ability of a learning algorithm to return a hypothesis which does not
belong to $\ch$ (such an algorithm is called {\em improper}). This
flexibility makes it very hard to apply reductions from
$\mathbf{NP}$-hard problems (again, see
\cite{daniely2013average}). Until recently, there was only a single
framework, due to Kearns and Valiant \cite{KearnsVa89}, to prove lower
bounds on learning problems.  The framework of \cite{KearnsVa89} makes
it possible to show that certain cryptographic assumptions imply	
hardness of certain learning problems.  As indicated above, the lower
bounds established by this method are very far from the
performance of best known algorithms.

In a recent paper \cite{daniely2013average} (see also
\cite{daniely2013more}) developed a new framework to prove 
hardness of learning based on hardness on average of
CSP problems. Yet, \cite{daniely2013average} were not able
to use their technique to establish hardness results that are based on
a natural assumption on a well studied problem. Rather, they made a
very general and strong hardness assumption, that is concerned with general $\csp$ problems, most of which were never studied explicitly.  This was recognized in \cite{daniely2013average} as the main weakness of
their approach, and therefore the main direction for further research.
In this paper we make a natural assumption on the extensively studied
problem of refuting random $K$-SAT instances, in the spirit of Feige's
assumption~\cite{Feige02}. Under this assumption, and using the framework
of \cite{daniely2013average}, we show:

\begin{enumerate}
\item\label{intro:1} Learning $\mathrm{DNF}$'s is hard.
\item\label{intro:2} Learning intersections of $\omega(\log(n))$ halfspaces is hard, even over the boolean cube.
\item\label{intro:3} Agnostically\footnote{See section \ref{sec:learning_background} for a definition of agnostic learning.} learning conjunctions is hard.
\item\label{intro:4} Agnostically learning halfspaces is hard, even over the boolean cube. 
\item\label{intro:5} Agnostically learning parities is hard, even when $\cd$ is uniform.
\item\label{intro:6} Learning finite automata is hard.
\end{enumerate}
We note that~\ref{intro:4},~\ref{intro:6} can be established under cryptographic assumptions, using the
cryptographic technique \citep{FeldmanGoKhPo06,KearnsVa89}. Also, \ref{intro:5}  follows from the hardness of learning parities with noise\footnote{Note that agnostically learning parities when $\cd$ is uniform is not equivalent to the problem that is usually referred as ``learning parities with noise", since in agnostic learning, the noise might depend on the instance.} \cite{blum2003noise}, which is often taken as a hardness assumption. As for \ref{intro:2}, the previously best lower bounds~\cite{KlivansSh06} only rule out learning intersections of polynomially many halfspaces, again under cryptographic assumptions.
To the best
of our knowledge, \ref{intro:1}-\ref{intro:6} implies the hardness of virtually all (distribution free) learning problems that were previously shown hard (under various complexity assumptions).

\subsection{The random $K$-SAT assumption} 
Unless we face a dramatic breakthrough in complexity theory,
it seems unlikely that hardness of learning can be established on standard complexity assumptions such as $\mathbf{P}\ne\mathbf{NP}$ (see \cite{ApplebaumBaXi08,daniely2013average}). Indeed, all currently known lower bounds are based on cryptographic assumptions. 
Similarly to Feige's paper~\cite{Feige02}, we rely here on the hardness of refuting random $K$-SAT formulas.
As cryptographic assumptions, our assumption asserts the hardness on average of a certain problem that have been resisted extensive attempts of attack during the last 50 years (e.g. \cite{davis1962machine,BeamePi96,BeameKaPiSa98,BenWi99,Feige02,feige2004easily,coja2004strong,coja2010efficient}).

Let $J=\{C_1,\ldots,C_m\}$ be a random $K$-SAT formula on $n$ variables. Precisely, each $K$-SAT constraint $C_i$ is chosen independently and uniformly from the collection of $n$-variate $K$-SAT constraints.
A simple probabilistic argument shows that for some constant $C$ (depending only on $K$), if $m\ge Cn$, then $J$ is not satisfiable w.h.p. The problem of {\em refuting random $K$-SAT formulas} (a.k.a. the problem of distinguishing satisfiable from random $K$-SAT formulas) seeks efficient algorithms that provide, for most formulas, a {\em refutation}. That is, a proof that the formula is not satisfiable.

Concretely, we say that an algorithm is able to refute random $K$-SAT instances with $m=m(n)\ge Cn$ clauses if on $1-o_n(1)$ fraction of the $K$-SAT formulas with $m$ constraints, it outputs ``unsatisfiable", while for {\em every} satisfiable $K$-SAT formula with $m$ constraints, it outputs ``satisfiable"\footnote{See a precise definition in section \ref{sec:CSP}}. Since such an algorithm never errs on satisfiable formulas, an output of ``unsatisfiable" provides a proof that the formula is not satisfiable.

The problem of refuting random $K$-SAT formulas has been extensively studied during the last 50 years.
It is not hard to see that the problem gets easier as $m$ gets larger. The currently best known algorithms~\cite{feige2004easily,coja2004strong,coja2010efficient} can only refute random instances with $\Omega\left(n^{\lceil\frac{K}{2}\rceil}\right)$ constraints for $K\ge 4$ and $\Omega\left(n^{1.5}\right)$ constraints for $K=3$.
In light of that, Feige~\cite{Feige02} made the assumption that for $K=3$, refuting random instances with $Cn$ constraints, for every constant $C$, is hard (and used that to prove hardness of approximation results). Here, we put forward the following assumption.

\begin{assumption}\label{hyp:only_sat}
Refuting random $K$-$\sat$ formulas with $n^{f(K)}$ constraints is hard for some $f(K)=\omega(1)$.
\end{assumption}
\begin{terminology}
Computational problem is {\em RSAT-hard} if its tractability refutes assumption \ref{hyp:only_sat}.
\end{terminology}
We outline below some evidence to the assumption, in addition to known algorithms' performance.

{\bf Hardness of approximation.} Define the {\em value}, $\val(J)$, of a $K$-SAT formula $J$ as the maximal fraction of constraints that can be
simultaneously satisfied. Hastad's celebrated result~\citep{haastad2001some} asserts that if $\mathbf{P}\ne \mathbf{NP}$, it is hard to distinguish
satisfiable $K$-SAT instances from instances with
$1-2^{-K}\le\val(J)\le 1-2^{-K}+\epsilon$. Since the value of a random formula is approximately $1-2^{-K}$, we can interpret Hastad's result as claiming that it is hard to distinguish satisfiable from ``semi-random" $K$-SAT formulas (i.e., formulas whose value is approximately the value of a random formula). Therefore, assumption \ref{hyp:only_sat} can be seen as a strengthening of Hastad's result.

{\bf Resolution lower bounds.} The length of resolution refutations of random $K$-SAT formulas have been extensively studied (e.g.~\cite{haken1985intractability,BeamePi96,BeameKaPiSa98,BenWi99}). It is known (theorem 2.24 in~\cite{ben2001expansion}) that random formulas with $n^{\frac{K}{2}-\epsilon}$ constraints only have exponentially long resolution refutations. This shows that a large family of algorithms (the so-called Davis-Putnam algorithms~\cite{davis1962machine}) cannot efficiently refute random formulas with $n^{\frac{K}{2}-\epsilon}$ constraints. These bounds can also be taken as an indication that random instances do not have short refutations in general, and therefore
hard to refute.

{\bf Hierarchies lower bounds.} Another family of algorithms whose performance has been analyzed are convex relaxations \cite{buresh2003rank, schoenebeck2008linear, alekhnovich2005towards}.
In \cite{schoenebeck2008linear} it is shown that relaxations in the Lasserre hierarchy with sub-exponential many constraints cannot refute random formulas with $n^{\frac{K}{2}-\epsilon}$ constraints.

\subsection{Results}

{\bf Learning DNF's.}
A {\em $\dnf$ clause} is a conjunction of literals. A {\em $\dnf$ formula} is a disjunction of $\dnf$ clauses. Each $\dnf$ formula over $n$ variables naturally induces a function on $\{\pm 1\}^n$.  
The {\em size} of a $\dnf$ clause is the the number of literals, and the size of a $\mathrm{DNF}$ formula is the sum of the sizes of its clauses. For $q:\mathbb N\to\mathbb N$, denote by
$\mathrm{DNF}_{q(n)}$ the class of functions over $\{\pm
1\}^n$ that are realized by $\mathrm{DNF}$s of size $\le q(n)$.  Also, $\mathrm{DNF}^{q(n)}$ is the class of
functions that are realized by
$\mathrm{DNF}$ formulas with $\le q(n)$ clauses. Since each clause
is of size at most $n$, $\mathrm{DNF}^{q(n)}\subset
\mathrm{DNF}_{nq(n)}$.

Learning
hypothesis classes consisting of poly sized $\dnf$'s formulas has been a major effort in computational learning theory (e.g.~\cite{Valiant84,klivans2001learning,LinialMaNi89,mansour1995nlog}). Already in
Valiant's paper \citep{Valiant84}, it is shown that for every constant
$q$, $\dnf$-formulas with $\le q$ clauses
can be learnt efficiently. As for lower bounds, {\em properly} learning $\dnf$'s is known to be
hard \citep{PittVa88}. Yet, hardness of improperly learning
$\dnf$'s formulas has remained a major open question. Here we show:
\begin{theorem}\label{thm:dnf_few_clauses}
If $q(n)=\omega(\log(n))$ then learning $\mathrm{DNF}^{q(n)}$ is RSAT-hard.
\end{theorem}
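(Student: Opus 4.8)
The plan is to use the average-case-hardness-to-improper-learning reduction of \cite{daniely2013average}. In that framework, to show a class $\mathcal{C}$ is hard to learn it suffices to produce, for some distribution over the instance space, two computationally indistinguishable distributions over labeled samples: a \emph{realizable} one, whose examples are drawn i.i.d.\ and labeled by a (random) member of $\mathcal{C}$, and a \emph{scattered} one, whose labels are (statistically close to) independent fair coins; a learner for $\mathcal{C}$, run on half such a sample and tested on the other half, separates these (error $\le\epsilon$ versus error $\ge\tfrac12-o(1)$), so no efficient learner can exist if the two families are indistinguishable. Thus it is enough to: (a) design an efficient map $\Phi\mapsto S(\Phi)$ sending a random $K$-SAT formula with $m$ clauses to a labeled sample over $\{\pm1\}^{n}$ (with the two dimensions polynomially related); (b) show that for $\Phi$ from the planted distribution, $S(\Phi)$ is, w.h.p., consistent with a fixed member of $\mathrm{DNF}^{q(n)}$; and (c) show that for uniformly random $\Phi$, $S(\Phi)$ is scattered. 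Then an efficient learner for $\mathrm{DNF}^{q(n)}$ refutes random $K$-SAT, contradicting Assumption~\ref{hyp:only_sat}. The fact underlying (b) is elementary: the negation of a width-$K$ clause $\ell_1\vee\cdots\vee\ell_K$ is the width-$K$ term $\bar\ell_1\wedge\cdots\wedge\bar\ell_K$, so a $q$-clause $K$-CNF negates to a $q$-term DNF; the planted DNF will literally be assembled from a few clauses of $\Phi$ (or, equivalently, from a few planted assignments, each contributing one term).

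The heart of the construction is the encoding and the labels. One encodes clauses (possibly paired with partial assignments) so that ``the planted object is consistent with this example'' becomes exactly ``this point satisfies one of $q$ fixed terms,'' and one designs the labels so that the marginal label distribution is the same, up to $o(1)$ total variation over the whole sample, in the planted and the random cases --- otherwise a constant hypothesis trivially separates the two cases and the reduction collapses. Forcing this balance ties $q$, $K$ and the clause density together (the fraction of examples a $q$-term DNF pushes to a fixed label must stay bounded away from $0$ and $1$, which is what makes $q$ grow), and one must verify that the density and arity used still lie in the regime Assumption~\ref{hyp:only_sat} declares hard; since refuting only gets easier at higher density (subsample the clauses), it suffices to obtain a refutation with $m=n^{O(1)}$ clauses for arity $K$ large enough that $n^{f(K)}\ge m$, and optimizing this yields precisely the threshold $q(n)=\omega(\log n)$. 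We also note that item~\ref{intro:2} follows formally: complementing labels, learning $\mathrm{DNF}^{q}$ equals learning $q$-clause CNFs, each clause is a halfspace, so these are intersections of $q$ halfspaces; hence Theorem~\ref{thm:dnf_few_clauses} implies hardness of learning intersections of $\omega(\log n)$ halfspaces.

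I expect step (c) --- scatteredness of $S(\Phi)$ for random $\Phi$ --- to be the main obstacle, since it is exactly where the $K$-SAT assumption is used and it cannot be argued information-theoretically (a random DNF's labels are not literally independent of the examples). One argues the contrapositive: if $S(\Phi)$ were not scattered, i.e.\ if with non-negligible probability over random $\Phi$ some efficiently computable hypothesis were $\Omega(1)$-correlated with the labels, that correlation could be decoded into a short certificate of unsatisfiability of $\Phi$, giving an efficient refutation and contradicting Assumption~\ref{hyp:only_sat}. Carrying this out needs the quantitative apparatus of \cite{daniely2013average}: controlling the bias / distributional complexity of the $K$-SAT predicate, an amplification step (XOR-ing or tensoring constraints) to boost the effective arity so that the scattered hypotheses have advantage only $o(1)$, and a union bound so that planted realizability (b) and scatteredness (c) hold simultaneously with the required probabilities. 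A final, more routine point is to bridge the two-distribution distinguishing statement and the ``errs only on satisfiable instances'' notion of refutation from Section~\ref{sec:CSP}, which is standard once the planted distribution is supported on satisfiable formulas.
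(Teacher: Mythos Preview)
Your high-level framing is right --- the paper also reduces to ``distinguish realizable from scattered'' via the machinery of \cite{daniely2013average} --- but you have misidentified where the difficulty lies, and your plan for step~(c) would not go through as stated.

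Scatteredness in the sense of Theorem~\ref{thm:basic_realizable} is a purely \emph{statistical} property: for \emph{every} function $f$ (efficient or not), $\Pr_S[\Err_S(f)\le\beta]\le 2^{-p(n)}$. It therefore cannot be established by the contrapositive you describe (``if some efficiently computable hypothesis were correlated \ldots''), since a non-scattered ensemble need not admit any \emph{efficient} witness to correlation. The hardness assumption on $K$-SAT is not what gives you scatteredness; it is what gives you \emph{indistinguishability} of the realizable and scattered cases. Conversely, your remark that scatteredness ``cannot be argued information-theoretically'' is precisely backwards: in the paper it \emph{is} argued information-theoretically, and making this possible is the main trick.

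The trick is to pass to a CSP whose random case has labels that are literally independent fair coins. Starting from $\csp^{\rand}_{n^d}(\sat_K)$, the paper first ANDs together $M$ disjoint $\sat_K$ constraints to form a single constraint for the predicate $T_{K,M}(z)=\bigwedge_{t=1}^M\bigvee_{j=1}^K z_{(t-1)K+j}$ (a greedy grouping argument, not an XOR/tensor amplification). It then moves to the problem $\csp^{\rand}(T_{K,M},\neg T_{K,M})$ by independently replacing each constraint, with probability $\tfrac12$, by a uniformly random $\neg T_{K,M}$ constraint. In the random case the resulting ``label'' (whether the constraint is a $T$ or a $\neg T$ constraint) is now an independent fair coin, so scatteredness is the trivial Hoeffding computation of Example~\ref{examp:1}. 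In the satisfiable case, the original satisfying assignment $\psi$ still satisfies every surviving $T_{K,M}$ constraint, and it also satisfies each new random $\neg T_{K,M}$ constraint with probability $1-(1-2^{-K})^{M}$; hence all $n^{d-1}$ new constraints are satisfied by $\psi$ with probability $1-o(1)$ precisely when $M=\omega(\log n)$. That inequality --- not a balance-of-label-frequencies argument --- is where the threshold $q(n)=\omega(\log n)$ enters. Finally, since $\neg T_{K,M}$ is expressible as an $M$-clause DNF, an embedding $g:\cx_{n,KM}\to\{\pm1\}^{2KMn}$ lets each $h_\psi$ be written as $h'\circ g$ with $h'\in\dnf^{M}$, realizing the class exactly. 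The DNF is determined by the single planted assignment $\psi$ (one term per disjunct of $\neg T_{K,M}$), not ``assembled from clauses of $\Phi$.''
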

Since $\mathrm{DNF}^{q(n)}\subset \mathrm{DNF}_{nq(n)}$, we immediately conclude that learning $\dnf$'s of size, say, $\le n\log^2(n)$, is RSAT-hard. By a simple scaling argument (e.g. \cite{daniely2013average}), we obtain an even stronger result:
\begin{corollary}\label{cor:dnf_small}
For every $\epsilon>0$, it is RSAT-hard to learn $\dnf_{n^\epsilon}$.
\end{corollary}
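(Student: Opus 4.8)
The plan is to prove Corollary~\ref{cor:dnf_small} by a padding argument that amplifies the hardness provided by Theorem~\ref{thm:dnf_few_clauses}: a $\dnf$ of near-linear size over $m$ variables becomes, after adjoining many irrelevant variables, a $\dnf$ of sub-polynomial size over $n \gg m$ variables, so an efficient learner for $\dnf_{n^\epsilon}$ would in particular solve the $m$-variable problem.

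Fix $\epsilon>0$. First I would pin down the ``hard'' instance size: applying Theorem~\ref{thm:dnf_few_clauses} with $q(m)=\log^2 m=\omega(\log m)$ and using the containment $\mathrm{DNF}^{q(m)}\subseteq\mathrm{DNF}_{mq(m)}$ noted in the text, learning $\dnf_{m\log^2 m}$ over $\{\pm1\}^m$ is RSAT-hard (hardness transfers upward along class inclusions, since a learner for the larger class is also a learner for any target in the smaller one). It therefore suffices to convert a hypothetical efficient learner $A$ for $\dnf_{n^\epsilon}$ over $\{\pm1\}^n$ into an efficient learner for $\dnf_{m\log^2 m}$ over $\{\pm1\}^m$. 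Set $n=n(m):=m^{\lceil 2/\epsilon\rceil}$, which is polynomial in $m$ and satisfies $n^\epsilon\ge m^2\ge m\log^2 m$ for all large $m$. Given a target $h^*\in\dnf_{m\log^2 m}$ and a distribution $\cd$ on $\{\pm1\}^m$, embed the instance into $n$ variables using a fixed dummy block: let $\cd'$ be the push-forward of $\cd$ under $x\mapsto(x,\one)$ (appending $n-m$ ones), and set $h'^*(x_1,\dots,x_n):=h^*(x_1,\dots,x_m)$. The same $\dnf$ formula computing $h^*$ computes $h'^*$ over $n$ variables and still has size $\le m\log^2 m\le n^\epsilon$, so $h'^*\in\dnf_{n^\epsilon}$. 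A sample oracle for $(\cd',h'^*)$ is simulated from one for $(\cd,h^*)$ by replacing each example $(x,y)$ with $((x,\one),y)$. Running $A$ with accuracy parameter $\epsilon_0$ takes time $\poly(n,1/\epsilon_0)=\poly(m,1/\epsilon_0)$ and returns, w.h.p., some $h'\colon\{\pm1\}^n\to\{0,1\}$ with $\Err_{\cd'}(h')\le\epsilon_0$; outputting $h(x):=h'(x,\one)$ then gives $\Err_{\cd}(h)=\Err_{\cd'}(h')\le\epsilon_0$, since $\cd'$ is supported on $\{(x,\one)\}$ and $h^*(x)=h'^*(x,\one)$.

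This contradicts the RSAT-hardness of learning $\dnf_{m\log^2 m}$, so learning $\dnf_{n^\epsilon}$ is RSAT-hard, proving the corollary. There is no real ``hard part'' here beyond bookkeeping: one must (i) keep $n$ polynomial in $m$ so that $A$'s running time remains polynomial in the new parameter $m$, and (ii) verify the size inequality $m\log^2 m\le n^\epsilon$ so the padded target genuinely lies in $\dnf_{n^\epsilon}$. Using a degenerate (all-ones) dummy block rather than random padding is the one design choice worth making deliberately, as it makes $\cd'$ concentrated on the embedded copy of $\{\pm1\}^m$ and removes any need to de-randomize the irrelevant coordinates of $h'$. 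Essentially this scaling argument appears in \cite{daniely2013average}.
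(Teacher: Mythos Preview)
Your proposal is correct and is exactly the ``simple scaling argument'' the paper invokes (and attributes to \cite{daniely2013average}): first use Theorem~\ref{thm:dnf_few_clauses} together with $\mathrm{DNF}^{q(n)}\subset\mathrm{DNF}_{nq(n)}$ to get hardness for $\dnf_{m\log^2 m}$, then pad with dummy variables so that this class sits inside $\dnf_{n^\epsilon}$ for $n$ a fixed polynomial in $m$. The paper gives no further details beyond the one-line reference, so your write-up simply spells out what the paper leaves implicit.
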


\begin{remark}\label{rem:boosting}
By boosting results \cite{Schapire89}, hardness of improper learning is automatically very strong quantitatively. Namely, for every $c>0$, it is hard to find a classifier with error $\le \frac{1}{2}-\frac{1}{n^c}$. Put differently, making a random guess on each example, is essentially optimal.
\end{remark}
{\bf Additional results.}
Theorem \ref{thm:dnf_few_clauses} implies the hardness of several problems, in addition to DNFs.

\begin{theorem}\label{thm:intersection}
Learning intersections of $\omega(\log(n))$ halfsapces over $\{\pm 1\}^n$ is RSAT-hard.
\end{theorem}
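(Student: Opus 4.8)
The plan is to reduce learning $\mathrm{DNF}^{q(n)}$, which is RSAT-hard for $q(n)=\omega(\log n)$ by Theorem~\ref{thm:dnf_few_clauses}, to learning intersections of $q(n)$ halfspaces, via two elementary structural observations together with the standard facts that PAC learnability is preserved under negating the label and under passing to a sub-class.

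\emph{Structural claim.} The negation of a function in $\mathrm{DNF}^{q(n)}$ is an intersection of at most $q(n)$ halfspaces over $\{\pm 1\}^n$. Indeed, if $\varphi = C_1 \vee \cdots \vee C_k$ with $k \le q(n)$ and each $C_i$ a conjunction of literals, then by De Morgan $\neg\varphi = \neg C_1 \wedge \cdots \wedge \neg C_k$, and each $\neg C_i$ is a disjunction of literals. A disjunction of literals over $\{\pm1\}^n$ is a halfspace: writing its literals as $a_1 x_{j_1},\ldots,a_t x_{j_t}$ with $a_r\in\{\pm1\}$ and the convention that the $r$-th literal is satisfied iff $a_r x_{j_r}=1$, the disjunction evaluates to $1$ exactly when $\sum_{r=1}^t a_r x_{j_r} > -t$, a linear threshold function with $\{\pm1\}$ weights. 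Hence $\neg\varphi$ is the intersection of $k \le q(n)$ such halfspaces.

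\emph{Reduction.} Fix any $t(n)=\omega(\log n)$ and suppose, for contradiction, that intersections of $t(n)$ halfspaces over $\{\pm1\}^n$ are efficiently learnable; apply this with $q(n)=t(n)$. Given an i.i.d.\ sample labelled by an unknown $\varphi\in\mathrm{DNF}^{q(n)}$, flip all labels to obtain a sample labelled by $\neg\varphi$, which by the structural claim lies in the class of intersections of $q(n)$ halfspaces. Running the assumed learner produces, w.h.p., a hypothesis $g$ with $\Err(g)\le\epsilon$ relative to $\neg\varphi$; output $\neg g$, whose error relative to $\varphi$ equals $\Err(g)\le\epsilon$. This is an efficient PAC learner for $\mathrm{DNF}^{q(n)}$, contradicting Theorem~\ref{thm:dnf_few_clauses}.

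This is a black-box manipulation, so there is no genuine technical obstacle. The only points requiring care are bookkeeping ones: checking that the threshold functions above qualify under the precise notion of ``halfspace over $\{\pm1\}^n$'' intended in the statement (they have integer, in fact $\pm1$, weights, so this is automatic), and that the number of halfspaces equals the number of DNF clauses, so the $\omega(\log n)$ bound transfers verbatim.
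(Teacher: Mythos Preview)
Your argument is correct and follows the same route as the paper: derive the result directly from Theorem~\ref{thm:dnf_few_clauses} via the observation that (the negation of) a DNF with $q(n)$ clauses is an intersection of $q(n)$ halfspaces. The paper states this in a single sentence (``a DNF formula with $q(n)$ clauses is an intersection of $q(n)$ halfspaces''), which is slightly imprecise as written; your explicit use of De~Morgan together with the label-flipping reduction is the clean way to make that sentence rigorous.
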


\begin{theorem}\label{thm:monomials}
Agnostically learning conjunctions is RSAT-hard.
\end{theorem}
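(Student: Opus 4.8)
The plan is to reduce from the hardness of \emph{weakly} learning $\mathrm{DNF}^{q(n)}$. Fix $q(n)=\log^2 n$, which is $\omega(\log n)$, so by Theorem~\ref{thm:dnf_few_clauses} learning $\mathrm{DNF}^{q(n)}$ is RSAT-hard, and by Remark~\ref{rem:boosting} this already rules out any efficient algorithm producing a classifier of error $\le\tfrac12-\tfrac1{n^c}$ (for every constant $c$). Hence it suffices to show that an efficient agnostic learner $A$ for the class of conjunctions (of literals over $x_1,\dots,x_n$) can be turned into an efficient algorithm that, given i.i.d.\ samples $(x,f(x))$ with $x\sim\cd$ and $f=C_1\vee\cdots\vee C_q\in\mathrm{DNF}^{q(n)}$ under an arbitrary $\cd$ on $\{\pm1\}^n$, outputs with high probability a hypothesis $h$ with $\Err_\cd(h)\le\tfrac12-\tfrac1{\poly(n)}$.

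The crux is a covering observation. Write $p=\Pr_{x\sim\cd}[f(x)=1]$. Since $\{f=1\}\subseteq\bigcup_i\{C_i=1\}$, a union bound yields a clause $C_{i^*}$ with $\Pr_\cd[C_{i^*}(x)=1]\ge p/q$. As $C_{i^*}$ is a disjunct of $f$, we have $C_{i^*}(x)=1\Rightarrow f(x)=1$, so $C_{i^*}$ errs against $f$ only on $\{C_{i^*}=0,\ f=1\}$, giving $\Err_\cd(C_{i^*})=p-\Pr_\cd[C_{i^*}=1]\le p(1-1/q)$. A DNF clause is precisely a conjunction of literals, so for the (realizable) distribution of $(x,f(x))$ the optimal conjunction has error at most $p(1-1/q)$.

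With $\gamma:=\Theta(1/q)=\Theta(1/\log^2 n)$ taken to be a small constant times $1/q$, a short calculation shows that for \emph{every} value of $p$ at least one of two explicit hypotheses beats $\tfrac12-\gamma$: the constant hypothesis $h\equiv 1$, with error $1-p$, works once $p$ is at least about $\tfrac12+\gamma$, while by the previous paragraph the best conjunction has error $p(1-1/q)\le\tfrac12-\gamma$ once $p$ is at most about $\tfrac12+\gamma$, and the two ranges of $p$ overlap. The reduction therefore first draws $\poly(n)$ samples to estimate $p$ up to additive error $\Theta(\gamma)$; if the estimate lies above the cross-over point it returns $h\equiv 1$, and otherwise it runs $A$ on the (realizable) sample distribution of $(x,f(x))$ with target accuracy $\epsilon'=\Theta(1/q)=1/\poly(n)$ and returns the resulting $h$, which satisfies $\Err_\cd(h)\le p(1-1/q)+\epsilon'\le\tfrac12-\Omega(1/q)$. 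In either branch $\Err_\cd(h)\le\tfrac12-\Omega(1/\log^2 n)\le\tfrac12-\tfrac1n$ for large $n$, and the whole procedure runs in time $\poly(n,1/\epsilon')=\poly(n)$; the degenerate cases $p\approx 0$ and $p\approx 1$ are already absorbed into the two branches. Composing with Theorem~\ref{thm:dnf_few_clauses} and Remark~\ref{rem:boosting}, such an $A$ cannot exist unless Assumption~\ref{hyp:only_sat} fails, i.e.\ agnostically learning conjunctions is RSAT-hard.

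The one point needing care — and the reason this is not a single line — is that an agnostic conjunction learner controls the $\cd$-error only \emph{relative to the best conjunction}, which need not beat $\tfrac12$ when the target DNF is true on much more (or much less) than half of $\cd$'s mass; this is exactly why the reduction must be prepared to fall back on a constant hypothesis and why we must locate the cross-over between the two regimes precisely in terms of $\gamma=\Theta(1/q)$. Beyond this bookkeeping I do not anticipate any genuine difficulty, since all of the hardness is imported from Theorem~\ref{thm:dnf_few_clauses}.
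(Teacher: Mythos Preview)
Your proposal is correct and follows the same route as the paper: reduce the RSAT-hardness of (weakly) learning $\mathrm{DNF}^{q(n)}$ to agnostic learning of conjunctions. The paper dispatches this in one line by citing \cite{LeeBaWi96} for the reduction, whereas you spell it out directly—the pigeonhole argument that some term $C_{i^*}$ of the DNF has error at most $p(1-1/q)$, together with the case split against the constant-$1$ hypothesis, is exactly the content of that cited reduction (often attributed to Kearns--Schapire--Sellie / Lee--Bartlett--Williamson). Two minor remarks: the step ``$\{f=1\}\subseteq\bigcup_i\{C_i=1\}$ so some $C_{i^*}$ has mass $\ge p/q$'' is pigeonhole rather than a union bound, and writing $\epsilon'=\Theta(1/q)=1/\poly(n)$ is a slight abuse since $1/\log^2 n$ is not $n^{-c}$, but what you actually need—$\poly(n,1/\epsilon')=\poly(n)$—does hold.
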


\begin{theorem}\label{thm:halfspaces}
Agnostically learning halfspaces over $\{\pm 1\}^n$ is RSAT-hard.
\end{theorem}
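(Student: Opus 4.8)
The plan is to reduce from the hardness of agnostically learning conjunctions, which is Theorem~\ref{thm:monomials}, to the hardness of agnostically learning halfspaces. The key observation is that every conjunction over $\{\pm 1\}^n$ is itself realizable by a halfspace: if $c(x) = \bigwedge_{i\in S}\ell_i(x)$ where each $\ell_i$ is a literal (i.e.\ $\ell_i(x) = x_i$ or $\ell_i(x) = -x_i$), then $c(x) = 1$ precisely when $\sum_{i\in S}\epsilon_i x_i = |S|$ and $c(x)=0$ otherwise, where $\epsilon_i\in\{\pm 1\}$ encodes whether $\ell_i$ is negated. Since $\sum_{i\in S}\epsilon_i x_i \le |S|$ always, with equality iff all literals are satisfied, the conjunction $c$ agrees with the halfspace $x\mapsto \sign\!\left(\sum_{i\in S}\epsilon_i x_i - |S| + \tfrac12\right)$ on every point of the cube. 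Hence the hypothesis class of conjunctions is contained (as a class of Boolean functions on $\{\pm 1\}^n$) in the class of halfspaces.

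First I would invoke the standard fact that in the agnostic model, containment of hypothesis classes transfers hardness in the easy direction: if $\ch_1 \subseteq \ch_2$ pointwise, then an agnostic learner for $\ch_2$ is automatically an agnostic learner for $\ch_1$, because the benchmark $\min_{h\in\ch_1}\Err_\cd(h)$ against which we compete is at least $\min_{h\in\ch_2}\Err_\cd(h)$, so a hypothesis with error at most $\min_{h\in\ch_2}\Err_\cd(h)+\epsilon$ also has error at most $\min_{h\in\ch_1}\Err_\cd(h)+\epsilon$. Applying this with $\ch_1$ the conjunctions and $\ch_2$ the halfspaces, and combining with Theorem~\ref{thm:monomials}, yields that agnostically learning halfspaces over $\{\pm 1\}^n$ is RSAT-hard.

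The one point requiring care is the running-time bookkeeping: the reduction must be parameter-preserving, so that a $\poly(n,1/\epsilon)$-time agnostic halfspace learner gives a $\poly(n,1/\epsilon)$-time agnostic conjunction learner. This is immediate here since the reduction is the identity map on instances — we simply run the halfspace learner on the same sample oracle — and the halfspace hypothesis it outputs is evaluated in $\poly(n)$ time. The main (minor) obstacle is therefore not the argument itself but pinning down the precise definition of agnostic learning used in Section~\ref{sec:learning_background} so that the monotonicity-under-containment property is stated correctly; once that definition is in hand, the proof is essentially the two observations above. I expect the actual write-up to be only a few lines, with the bulk of the content being the explicit halfspace realizing a conjunction.
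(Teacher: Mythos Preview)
Your proposal is correct and matches the paper's own proof exactly: the paper derives Theorem~\ref{thm:halfspaces} from Theorem~\ref{thm:monomials} in one line, noting that conjunctions are a subclass of halfspaces, which is precisely the containment-and-monotonicity argument you spell out.
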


\begin{theorem}\label{thm:parity}
Agnostically learning parities\footnote{A parity is any hypothesis of the form $h(x)=\Pi_{i\in S}x_i$ for some $S\subset [n]$.} is RSAT-hard, even when the marginal distribution is uniform on $\{\pm 1\}^n$.
\end{theorem}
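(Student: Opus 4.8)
The plan is to instantiate the reduction framework used for Theorem~\ref{thm:dnf_few_clauses}: to prove RSAT-hardness of agnostically learning $\parity$ over the uniform distribution it suffices to produce, from a random (or planted-satisfiable) $K$-$\sat$ formula $J$ on $n$ variables with $m=n^{\omega(1)}$ clauses, a distribution $\cd_J$ on $\{\pm 1\}^N\times\{\pm 1\}$ with $N$ polynomial in $n$, sampled efficiently, whose marginal on $\{\pm 1\}^N$ is exactly uniform, and for which there is a gap: if $J$ is planted-satisfiable then some parity $h$ has $\Err_{\cd_J}(h)\le \tfrac12-\delta$, whereas if $J$ is uniformly random then every parity $h$ has $\Err_{\cd_J}(h)\ge \tfrac12-o(1)$. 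Running an agnostic parity learner with accuracy smaller than $\delta/2$ on $\cd_J$ then distinguishes the two cases, refuting random $K$-$\sat$ and contradicting Assumption~\ref{hyp:only_sat}.

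To build $\cd_J$ I would encode clauses into examples so that the input part is manifestly uniform: a random $K$-$\sat$ clause is a uniformly random $K$-subset of variables together with a uniformly random negation pattern in $\{\pm1\}^K$, so placing the (uniform) negation bits in the clause coordinates and padding the remaining coordinates with fresh uniform bits yields a uniform vector in $\{\pm 1\}^n$, extended by an auxiliary uniform string to reach $\{\pm1\}^N$. The label is then a parity-type functional of the example and the planted assignment --- along the lines of a product of the agreement signs $a_j z_{i_j}$ of the clause bits with the planted assignment --- and, following the amplification step of the framework, several independently drawn clauses are combined by XOR so that the value of the induced instance drops to $\tfrac12+o(1)$ when $J$ is random while a satisfying assignment of $J$ still determines one fixed parity $h_S$ carrying correlation bounded away from $0$.

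The two sides of the gap are then checked as follows. On the planted side, the planted assignment of $J$ directly exhibits the parity $h_S$, and a short computation bounds its correlation from below by a constant depending only on $K$ and the amplification parameter. On the random side, fix any $S\subseteq[N]$; conditioned on the input, the label of an example produced from a uniformly random clause is an essentially unbiased $\pm1$ bit, so a Chernoff bound shows that the empirical correlation of $h_S$ with the labels over $\poly(N)$ examples is $o(1)$ except with probability $\ll 2^{-N}$, and a union bound over all $2^{N}$ parities shows that w.h.p.\ no parity correlates with the labels; hence every parity has error $\tfrac12-o(1)$. Uniformity of the marginal is immediate from the construction.

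The main obstacle is reconciling three competing requirements. The random-instance side must be genuinely hard to refute, so the reduction cannot output plain systems of $\mathbb{F}_2$-linear equations (those are refuted in polynomial time by Gaussian elimination); it must therefore retain enough of the non-linear $K$-$\sat$ structure to inherit the hardness of Assumption~\ref{hyp:only_sat} while still (i) admitting a good parity on satisfiable instances and (ii) keeping the input marginal exactly uniform. Getting all three simultaneously is the delicate point, and it is precisely here that the clause-encoding and XOR-amplification machinery of the framework has to be set up carefully; once that is in place, the gap analysis and the reduction itself are routine.
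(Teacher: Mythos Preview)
The paper does not prove this theorem directly at all: its proof is a single sentence, namely that Theorem~\ref{thm:parity} follows from Theorem~\ref{thm:dnf_few_clauses} together with the black-box reduction of Feldman, Gopalan, Khot, and Ponnuswami~\cite{FeldmanGoKhPo06}, who showed that (PAC) learning DNFs reduces to agnostically learning parities under the uniform distribution. So once RSAT-hardness of learning $\dnf^{q(n)}$ is established, RSAT-hardness of agnostic parity learning over the uniform marginal is immediate.

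Your proposal takes a completely different route --- a direct encoding of $K$-SAT clauses into uniform examples whose labels correlate with a planted parity --- and as written it has a genuine gap. You correctly identify the obstacle: if the label is any $\mathbb{F}_2$-linear functional of the input bits and the planted assignment, then on the random side Gaussian elimination refutes the instance in polynomial time, so no RSAT-hardness can be inherited. But you do not actually resolve this; the sentence ``once that is in place, the gap analysis and the reduction itself are routine'' defers precisely the nontrivial step. The tension you describe --- keeping the marginal exactly uniform, keeping a single parity well-correlated on satisfiable instances, yet keeping the label nonlinear enough that random instances are hard to refute --- is not handled by any generic ``XOR-amplification'' of clauses, and nothing in the framework of \cite{daniely2013average} supplies such a gadget off the shelf.

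The reason the paper's route works is that the Feldman et al.\ reduction is not a clause-by-clause encoding at all: it exploits the Fourier structure of DNFs (a DNF that is not trivially biased must have a noticeably heavy Fourier coefficient), and this is what simultaneously guarantees a good parity on the realizable side and a uniform marginal, while the hardness on the random side is inherited wholesale from the DNF problem rather than argued afresh. If you want a self-contained argument, the cleanest fix is to invoke Theorem~\ref{thm:dnf_few_clauses} and then reproduce the \cite{FeldmanGoKhPo06} reduction; attempting to bypass DNFs and go straight from random $K$-SAT to uniform-marginal parity instances would require a genuinely new construction beyond what you have sketched.
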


\begin{theorem}\label{thm:auto}
For every $\epsilon>0$, learning automata  of size $n^{\epsilon}$ is RSAT-hard.
\end{theorem}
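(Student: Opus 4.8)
The plan is to reduce learning $\dnf_{n^\delta}$, which is RSAT-hard by Corollary~\ref{cor:dnf_small}, to learning automata of size $(n')^{\epsilon}$ over a polynomially larger Boolean cube. Fix $\epsilon>0$ and choose a constant $\delta=\delta(\epsilon)>0$ with $\delta/(1+\delta)<\epsilon$. The key object is a map $\psi\colon\{\pm1\}^n\to\{\pm1\}^{n'}$ with $n'=\poly_\delta(n)$, computable in time $\poly(n)$ and \emph{independent of the unknown target}, such that every $\phi\in\dnf_{n^\delta}$ on $\{\pm1\}^n$ equals $A_\phi\circ\psi$ for some automaton $A_\phi$ with at most $(n')^{\epsilon}$ states. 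Given such a $\psi$, any efficient learner for size-$(n')^{\epsilon}$ automata yields an efficient learner for $\dnf_{n^\delta}$: draw samples $(x,\phi(x))$ with $x\sim\cd$, feed $(\psi(x),\phi(x))$ — a correctly labelled example for the target $A_\phi$ under the push-forward $\psi_*\cd$ — to the automaton learner, and return $h\circ\psi$ where $h$ is its hypothesis. The error is preserved exactly, $\Pr_{x\sim\cd}[h(\psi(x))\ne\phi(x)]=\Pr_{z\sim\psi_*\cd}[h(z)\ne A_\phi(z)]$, and the running time stays $\poly(n)$ since $n'=\poly(n)$ and $n=(n')^{\Omega(1)}$. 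This contradicts Corollary~\ref{cor:dnf_small}, proving the theorem.

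\medskip\noindent The encoding $\psi$ is the concatenation of $q:=\lceil n^\delta\rceil$ identical ``passes'' separated by a delimiter symbol (itself coded by two bits); each pass spells out the $n$ pairs $(i,x_i)$ in the order $i=1,\dots,n$, writing $i$ in binary ($\lceil\log n\rceil$ bits) followed by the bit $x_i$, so that $n'=\Theta\!\big(n^{1+\delta}\log n\big)=\poly(n)$. Write $\phi=T_1\vee\cdots\vee T_q$, padding with repeated terms if $\phi$ has fewer than $q$ terms (which does not change the function). The automaton $A_\phi$ devotes pass $j$ to evaluating the term $T_j$: it walks through the $(i,x_i)$-blocks and looks for the literals of $T_j$ in increasing order of their variable index; recognising the binary tag of one prescribed variable costs only $O(\log n)$ states (it is pattern-matching against a single fixed string, not storing an address), skipping an unwanted block costs $O(\log n)$ reused states, and a ``which literal of $T_j$ am I still waiting for'' counter costs $O(|T_j|)$ states. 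When all literals of $T_j$ have been confirmed the automaton jumps to an absorbing accepting sink; if some literal of $T_j$ is violated it fast-forwards to the next delimiter and begins pass $j+1$; if pass $q$ ends without acceptance it moves to a rejecting sink. Summing $O(|T_j|\log n)$ over $j$, the automaton has $O\!\big((\textstyle\sum_j|T_j|)\log n + q\big)=O(n^{\delta}\log n)$ states, which is at most $(n')^{\epsilon}$ for all large $n$, since $\delta/(1+\delta)<\epsilon$.

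\medskip\noindent The step I expect to be the main obstacle is this automaton construction, i.e.\ keeping $|A_\phi|$ \emph{polynomially smaller} than the input length $n'$. A naive encoding would force $\Omega(n)$ states merely to address the $n$ coordinates, which is useless here, since we are aiming for size $(n')^{\epsilon}$ with $\epsilon$ arbitrarily small. The three ideas that circumvent this are: (i) tagging each coordinate with its index, so the automaton can pattern-match (in $O(\log n)$ states) against the few indices a term cares about rather than maintain a position counter; (ii) repeating the input $q$ times, so each of the $q$ terms gets a dedicated pass and the automaton never tracks the alive/dead status of more than one term at a time (tracking all $q$ at once would cost $2^{\Omega(q)}$ states); and (iii) inserting delimiters so that no global counter across passes is needed. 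One must also be careful that a term's variables need not occur in index order — handled by sorting each term's literals and searching for them in increasing order within its pass — and that we deliberately keep in the encoding the (up to $n$) coordinates irrelevant to $\phi$: discarding them would shrink $n'$ and destroy the sublinearity of $|A_\phi|$ relative to $n'$.
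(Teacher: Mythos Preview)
Your proof is essentially correct but substantially more elaborate than needed. The paper's argument (Section~\ref{sec:auto}) uses precisely the ``naive'' encoding you dismiss: it maps $x\in\{\pm1\}^n$ to $g(x)=(x,x,\ldots,x)\in\{\pm1\}^{n^2}$, i.e.\ $n$ raw copies with no index tags and no delimiters, and shows that any $T\in\dnf^{n}$ (hard by Theorem~\ref{thm:dnf_few_clauses}) equals $A\circ g$ for an automaton $A$ with $2n^2+1$ states --- two states per input bit (``clause $j$ violated so far'' vs.\ ``not yet'') plus an accepting sink. This is \emph{linear} in the new input length $n'=n^2$, so by itself it only yields hardness for $\auto_{O(n)}$; the passage to $\auto_{n^\epsilon}$ is then the one-line scaling/padding argument (pad the input with roughly $(n')^{1/\epsilon}$ dummy bits the automaton ignores) that the paper invokes from \cite{daniely2013average}. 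Your claim that a linear-size automaton is ``useless here'' overlooks this scaling step, and it is what forces you into the index-tagging machinery.

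Your route does go through, but two details should be tightened. First, ``padding with repeated terms'' can blow $\sum_j|T_j|$ up from $n^\delta$ to $n^{2\delta}$ (repeat a term of size $n^\delta$ about $n^\delta$ times); pad instead with single-literal terms, or simply do not pad and have the automaton fast-forward through the surplus passes. Second, the delimiter must be recognizable without a global position counter; since your blocks appear in the order $i=1,\ldots,n$, the automaton can detect end-of-pass by pattern-matching the index $n$ ($O(\log n)$ additional states), but this should be said explicitly rather than left to a two-bit marker that could collide with block contents. With these fixes your count $O(n^\delta\log n)\le (n')^{\epsilon}$ stands and the reduction is valid --- it just buys nothing over the paper's two-line construction followed by scaling.
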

Theorem \ref{thm:intersection} is a direct consequence of theorem \ref{thm:dnf_few_clauses}, as a DNF formula with $q(n)$ clauses is an intersection of $q(n)$ halfspaces. 
Theorem \ref{thm:monomials} follows from theorem \ref{thm:dnf_few_clauses}, as learning DNFs can be reduced to agnostically learning conjunctions~\cite{LeeBaWi96}.
Theorem \ref{thm:halfspaces} follows from theorem \ref{thm:monomials}, as conjunctions are a subclass of halfspaces.
Theorem \ref{thm:parity} follows from theorem \ref{thm:dnf_few_clauses} and ~\cite{FeldmanGoKhPo06}, who showed that learning DNFs can be reduced to agnostically learning parities over the uniform distribution.
Theorem \ref{thm:auto} follows from theorem \ref{thm:dnf_few_clauses} by a simple reduction (see section \ref{sec:auto}).

\subsection{Related work}
As indicated above, hardness of learning is traditionally established based on cryptographic assumptions. The first such result follows from \cite{GoldreichGoMi86}, and show that if one-way functions exist, than it is hard to learn polynomial sized circuits. To prove lower bounds on simpler hypothesis classes, researchers had to rely on more concrete hardness assumptions. Kearns and Valiant \cite{KearnsVa89} were the first to prove such results. They showed that assuming the hardness of various cryptographic problems (breaking RSA, factoring Blum integers and detecting quadratic residues), it is hard to learn automata, constant depth threshold circuits, $\log$-depth circuits and boolean formulae. Kharitanov~\cite{Kharitonov93} showed, under a relatively strong assumption on the complexity of factoring random Blum integers, that learning constant depth circuits (for unspecified constant) is hard. Klivans and Sherstov \cite{KlivansSh06} showed that, under the hardness of the shortest vector problem, learning intersections of polynomially many halfspaces is hard. By \cite{BlumKa97}, it also follows that agnostically learning halfspaces is hard. Hardness of agnostically learning halfspaces also follows from the hardness of learning parities with noise \cite{KalaiKlMaSe05}.

There is a large body of work on various variants of the standard (improper and distribution free) PAC model. Hardness of proper learning, when the leaner must return a hypothesis from the learnt class, in much more understood (e.g. \cite{khot2008hardness, khot2011hardness, GuruswamiRa06, FeldmanGoKhPo06, PittVa88}). Hardness of learning with restrictions on the distribution were studied in, e.g., \cite{klivans2014embedding, KalaiKlMaSe05, Kharitonov93}. Hardness of learning when the learner can ask the label of unseen examples were studied in, e.g., \cite{AngluinKh91, Kharitonov93}.

Lower bounds using the technique we use in this paper initiated in \cite{daniely2013more, daniely2013average}. In \cite{daniely2013more} it was shown, under Feige's assumption, that if the number of examples is limited (even tough information theoretically sufficient), then learning halfspaces over sparse vectors is hard. The full methodology we use here presented in \cite{daniely2013average}. They made a strong and general assumption, that says, roughly, that for every random $\mathrm{CSP}$ problem, if the number of random constraints is too small to provide short resolution proofs, then the SDP relaxation of \cite{raghavendra2008optimal} has optimal approximation ratio. Under this assumption they concluded hardness results  that are similar to the results presented here.

\section{Preliminaries}
\subsection{PAC Learning}\label{sec:learning_background}
A {\em hypothesis class}, $\ch$, is a series of collections of functions $\ch_n\subset \{0,1\}^{\cx_n},\;n=1,2,\ldots$. We often abuse notation and identify $\ch$ with $\ch_n$. The instance spaces $\cx_n$ we consider are $\{\pm 1\}^n$, $\{0,1\}^n$ or $\cx_{n,K}$ (see section \ref{sec:CSP}).  
Distributions on $\cz_n:=\cx_n\times\{0,1\}$ are denoted $\cd_n$.
The error of $h:\cx_n\to\{0,1\}$  is $\Err_{\cd_n}(h)=\Pr_{(x,y)\sim\cd_n }\left(h(x)\ne y\right)$. For a class $\ch_n$, we let $\Err_{\cd_n}(\ch_n)=\min_{h\in\ch_n}\Err_{\cd_n}(h)$. We say that 
$\cd_n$ is \emph{realizable} by $h$ (resp. $\ch_n$) if $\Err_{\cd_n}(h)=0$ (resp. $\Err_{\cd_n}(\ch_n)=0$). 
A {\em sample} is a sequence $S=\{(x_1,y_1),\ldots (x_m,y_m)\}\in\cz^m_n$. The {\em empirical error} of $h:\cx_n\to\{0,1\}$ on $S$ is $\Err_{S}(h)=\frac{1}{m}\sum_{i=1}^m1(h(x_i)\ne y_i)$, while the empirical error of $\ch_n$ on $S$ is $\Err_{S}(\ch_n)=\min_{h\in\ch_n}\Err_S(h)$.
We say that $S$ is \emph{realizable} by $h$ (resp. $\ch_n$) if $\Err_S(h)=0$ (resp. $\Err_S(\ch_n)=0$). 

A {\em learning algorithm}, $\cl$, obtains an error, confidence and complexity parameters $0<\epsilon<1$, $0<\delta<1$, and $n$, as well as oracle access to examples from unknown distribution $\cd_n$ on $\cz_n$. It should output a (description of) hypothesis $h:\cx_n\to\{0,1\}$. We say that $\cl$ {\em (PAC) learns} $\ch$ if, for every realizable $\cd_n$, w.p. $\ge 1-\delta$, $\cl$ outputs a hypothesis with error $\le \epsilon$.
We say that $\cl$ {\em agnostically learns} $\ch$ if, for every $\cd_n$, w.p. $\ge 1-\delta$, $\cl$ outputs a hypothesis with error $\le \Err_{\cd_n}(\ch)+\epsilon$. 
We say that $\cl$ is {\em efficient} if it runs in time $\poly(n,1/\epsilon,1/\delta)$, and outputs a hypothesis that can be evaluated in time $\poly(n,1/\epsilon,1/\delta)$. Finally, $\cl$ is {\em proper} if it always outputs a hypothesis in $\ch$. Otherwise, we say that $\cl$ is {\em improper}.

\subsection{Random Constraints Satisfaction Problems}\label{sec:CSP}
Let $\cx_{n,K}$ be the collection of {\em (signed) $K$-tuples}, that is, vectors $x=[(\alpha_1,i_1),\ldots,(\alpha_K,i_K)]$ for $\alpha_1,\ldots,\alpha_K\in \{\pm 1\}$ and distinct $i_1,\ldots,i_K\in [n]$. For $j\in [K]$ we denote  $x(j)=(x^1(j),x^2(j))=(\alpha_j,i_j)$.
Each $x\in \cx_{n,K}$ defines a function $U_x:\{\pm 1\}^n\to\{\pm 1\}^K$ by $U_x(\psi)=(\alpha_1\psi_{i_1},\ldots,\alpha_K\psi_{i_K})$.

Let $P:\{\pm 1\}^K\to \{0,1\}$ be some predicate. A {\em
  $P$-constraint} with $n$ variables is a function $C:\{\pm
1\}^n\to\{0,1\}$ of the form $C(x)=P\circ U_x$ for some $x\in\cx_{n,K}$.
An instance to the {\em CSP problem} $\csp(P)$ is
a {\em $P$-formula}, i.e., a collection $J=\{C_1,\ldots,C_m\}$ of $P$-constraints (each is specified by a $K$-tuple). The
goal is to find an assignment $\psi\in \{\pm 1\}^n$ that maximizes
the fraction of satisfied constraints (i.e.,
constraints with $C_i(\psi)=1$). We will allow CSP problems where $P$ varies with $n$ (but is still fixed for every $n$). For example, we can look of the $\lceil\log(n)\rceil$-SAT problem.

We will often consider the problem of distinguishing satisfiable from random $P$ formulas (a.k.a. the problem of refuting random $P$ formulas).
Concretely, for $m:\mathbb{N}\to\mathbb{N}$, we say that the problem $\csp^{\rand}_{m(n)}(P)$ is easy, if there exists an efficient randomized algorithm, $\ca$, such that:
\begin{itemize}
\item If $J$ is a satisfiable instance to $\csp(P)$ with $n$ variables and $m(n)$ constraints, then
\[
\Pr_{\text{coins of }\ca}\left(\ca(J)=\text{``satisfiable"}\right)\ge\frac{3}{4}
\]  
\item If $J$ is a random\footnote{To be precise, in a random formula with $n$ variable and $m$ constraints, the $K$-tuple defining each constraint is chosen uniformly, and independently from the other constraints.} instance to $\csp(P)$ with $n$ variables and $m(n)$ constraints then, with probability $1-o_n(1)$ over the choice of $J$,
\[
\Pr_{\text{coins of }\ca}\left(\ca(J)=\text{``random"}\right)\ge \frac{3}{4}~.
\]  
\end{itemize}

\subsection{The methodology of \cite{daniely2013average}}\label{sec:methodology}
In this section we briefly survey the technique of \cite{daniely2013average, daniely2013more} to prove hardness of improper learning. 
Let $\cd=\{\cd^{m(n)}_n\}_{n}$ be a polynomial ensemble of distributions, that is, $\cd^{m(n)}_n$ is a distribution on $\cz_n^{m(n)}$ and $m(n)\le\poly(n)$. Think of $\cd^{m(n)}_n$ as a distribution that generates samples that are far from being realizable.
We say that it is hard to distinguish realizable from $\cd$-random samples if there is no efficient randomized algorithm $\ca$ with the following properties:
\begin{itemize}
\item For every realizable sample $S\in \cz^{m(n)}_n$,
\[\Pr_{\text{internal coins of }\ca}\left(\ca(S)=``realizable"\right)\ge \frac{3}{4}~.\]
\item If $S\sim \cd_n^{m(n)}$, then with probability $1-o_n(1)$ over the choice of $S$, it holds that 
\[\Pr_{\text{internal coins of }\ca}\left(\ca(S)=``unrelizable"\right)\ge \frac{3}{4}~.\]
\end{itemize}
For $p:\mathbb N\to(0,\infty)$ and $1>\beta>0$, we say that $\cd$ is {\em $(p(n),\beta)$-scattered} if, for large enough $n$, it holds that for every function $f:\cx_n\to\{0,1\}$, $
\Pr_{S\sim\cd^{m(n)}_n}\left(\Err_{S}(f)\le \beta\right)\le 2^{-p(n)}$.
\begin{example}\label{examp:1}
Let $\cd_n$ be a distribution over $\cz_n$ such that if $(x,y)\sim\cd_n$, then $y$ is a Bernoulli r.v. with parameter $\frac{1}{2}$, independent from $x$.
Let $\cd^{m(n)}_n$ be the distribution over $\cz_n^{m(n)}$ obtained by taking $m(n)$ independent examples from $\cd_n$. 
For $f:\cx_n\to\{0,1\}$, $\Pr_{S\sim\cd^{m(n)}_n}\left(\Err_{S}(f)\le \frac{1}{4}\right)$ is the probability of getting at most $\frac{m(n)}{4}$ heads in $m(n)$ independent tosses of a fair coin. By Hoeffding's bound, this probability is $\le 2^{-\frac{1}{8} m(n)}$. Therefore, $\cd=\{\cd^{m(n)}_n\}_{n}$ is $\left(\frac{1}{8} m(n),1/4\right)$-scattered.
\end{example}
Hardness of distinguishing realizable from scattered samples turns out to imply hardness of learning.
\begin{theorem}\cite{daniely2013average}\label{thm:basic_realizable}
Every hypothesis class that satisfies the following condition is not efficiently learnable. There exists $\beta>0$ such that for every $d>0$ there is an $(n^d,\beta)$-scattered ensemble $\cd$ for which it is hard to distinguish between a $\cd$-random sample and a realizable sample.
\end{theorem}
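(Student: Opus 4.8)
The plan is to establish the contrapositive: if $\ch$ admits an efficient PAC learner $\cl$, then for the $\beta$ in the statement and the $(n^d,\beta)$-scattered ensemble $\cd$ promised by the condition one can efficiently distinguish realizable samples from $\cd$-random ones, contradicting the hypothesis. Fix the constant $\beta$. Run $\cl$ with accuracy $\epsilon=\beta/2$, confidence $\delta=1/8$ and dimension $n$; since $\beta$ is a fixed constant its running time is bounded by some $T(n)\le n^c$ for a constant $c$ and all large $n$. Invoke the condition with $d=c+1$ to obtain an $(n^{d},\beta)$-scattered ensemble $\cd=\{\cd_n^{m(n)}\}_n$ for which distinguishing realizable from $\cd$-random samples is hard. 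Define the distinguisher $\ca$: on a sample $S\in\cz_n^{m(n)}$, simulate $\cl$, answering each of its example queries by a uniformly random element of $S$ drawn with replacement; let $h$ be the hypothesis it returns; compute $\Err_S(h)$ exactly (possible in time $\poly(n)$, since $|S|\le\poly(n)$ and $h$ is evaluable in time $\poly(n)$); output ``realizable'' if $\Err_S(h)\le 3\beta/4$ and ``unrealizable'' otherwise. Plainly $\ca$ runs in polynomial time.

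\textbf{Completeness.} If $S$ is realizable by $\ch_n$, then the uniform distribution $\mathrm{unif}(S)$ on $\cz_n$ is realizable by $\ch_n$ (a hypothesis with $\Err_S=0$ also has $\Err_{\mathrm{unif}(S)}=0$), and $\Err_{\mathrm{unif}(S)}(h)=\Err_S(h)$ for every $h$. Hence, with probability $\ge 1-\delta=7/8\ge 3/4$ over the coins of $\cl$ and the sampling, its output satisfies $\Err_S(h)=\Err_{\mathrm{unif}(S)}(h)\le\epsilon=\beta/2\le 3\beta/4$, so $\ca$ answers ``realizable''.

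\textbf{Soundness.} Within time $T(n)$, $\cl$ can produce at most $2^{T(n)+1}$ distinct output strings; let $H_n\subset\{0,1\}^{\cx_n}$ be the (finite) set of functions arising as its outputs. For each fixed $h\in H_n$, the $(n^d,\beta)$-scattered property gives $\Pr_{S\sim\cd_n^{m(n)}}(\Err_S(h)\le\beta)\le 2^{-n^{d}}$, so a union bound yields
\[
\Pr_{S\sim\cd_n^{m(n)}}\big(\exists\,h\in H_n:\ \Err_S(h)\le\beta\big)\ \le\ 2^{T(n)+1}\cdot 2^{-n^{d}}\ \le\ 2^{\,n^{c}+1-n^{c+1}}\ =\ o_n(1).
\]
Thus with probability $1-o_n(1)$ over $S\sim\cd_n^{m(n)}$, \emph{every} element of $H_n$ — in particular whatever $\cl$ returns — has $\Err_S(h)>\beta>3\beta/4$, so $\ca$ outputs ``unrealizable'' with probability $1$ over its internal coins. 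Together with completeness, $\ca$ contradicts the assumed hardness, and hence $\ch$ is not efficiently learnable.

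\textbf{Main obstacle.} The only delicate point is the union bound in the soundness step: since the output $h$ of $\cl$ depends on $S$ through the simulated oracle, the scattered bound cannot be applied to a single fixed hypothesis, and one must pay a factor equal to the size of $\cl$'s output range, which is $2^{\Theta(T(n))}$. This is precisely why the condition requires an $(n^d,\beta)$-scattered ensemble for \emph{every} $d$: one chooses $d$ larger than the polynomial degree of the learner's running time so that $2^{T(n)}\cdot 2^{-n^{d}}\to 0$. The remaining ingredients — realizability of $\mathrm{unif}(S)$, the identity $\Err_{\mathrm{unif}(S)}=\Err_S$, and the efficiency of $\ca$ — are routine.
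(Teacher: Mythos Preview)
Your proof is correct and follows essentially the same approach as the paper's: assume an efficient learner $\cl$, build a distinguisher that simulates $\cl$ using the uniform distribution on $S$ as the example oracle, and in the soundness step apply a union bound over the at most $2^{\poly(n)}$ possible outputs of $\cl$ against the $(n^d,\beta)$-scattered property with $d$ chosen to beat that polynomial. The only differences from the paper are cosmetic: the paper takes $\epsilon=\beta$, $\delta=1/4$, and threshold $\beta$ (versus your $\epsilon=\beta/2$, $\delta=1/8$, threshold $3\beta/4$), and it counts the number of random bits (including oracle bits) used by $\cl$ rather than the total running time to bound the size of the output range.
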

The basic observation of \cite{daniely2013average, daniely2013more} is that an efficient algorithm, running on a very scattered sample, will return a bad hypothesis w.h.p. The reason is that the output classifier has a short description, given by the polynomially many examples the algorithm uses. Hence, the number of hypotheses the algorithm {\em might return} is limited. Now, since the sample is scattered, all these hypotheses are likely to perform purely. Based on that observation, efficient learning algorithm can efficiently distinguish realizable from scattered samples: We can simply run the algorithm on the given sample to obtain a classifier $h$. Now, if the sample is realizable, $h$ will perform well. Otherwise, if the sample is scattered, $h$ will perform purely. Relying on that, we will be able to distinguish between the two cases. For completeness, we include the proof of theorem \ref{thm:basic_realizable} in section \ref{sec:proof_thm_basic}.

\section{Proof of theorem \ref{thm:dnf_few_clauses} }
\subsection{An overview}
Intuitively, the problem of distinguishing satisfiable from random formulas is similar to the problem of distinguishing realizable from random samples. In both problems, we try to distinguish rare and structured instances from very random and ``messy" instances. 
The course of the proof is to reduce the first problem to the second. Concretely, we reduce the problem $\csp_{n^d}^{\rand}(\sat_K)$ to the problem of distinguishing realizable (by $\dnf^{q(n)}$) samples from $(n^{d-2},\frac{1}{4})$-scattered samples. With such a reduction at hand, assumption \ref{hyp:only_sat} and theorem \ref{thm:basic_realizable}, implies theorem \ref{thm:dnf_few_clauses}.

\subsubsection*{CSP problems as learning problems}
The main conceptual idea is to interpret CSP problems as learning problems. Let $P:\{\pm 1\}^K\to\{0,1\}$ be some predicate. Every $\psi\in \{\pm 1\}^n$ naturally defines  $h_\psi:\cx_{n,K}\to \{0,1\}$, by mapping each $K$-tuple $x$ to the truth value of the corresponding constraint, given the assignment $\psi$. Namely, $h_{\psi}(x)=P\circ U_x(\psi)$.
Finally, let $\ch_{P}\subset \{0,1\}^{\cx_{n,K}}$ be the hypothesis class $\ch_{P}=\{h_\psi\mid \psi\in\{\pm 1\}^n\}$.

The problem $\csp(P)$ can be now formulated as follows. Given $x_1,\ldots,x_m\in \cx_{n,K}$, find $h_\psi\in \ch_P$ with minimal error on the sample $(x_1,1),\ldots,(x_m,1)$. Now, the problem $\csp^{\rand}_{m(n)}(P)$ is the problem of distinguishing a realizable sample from a random sample $(x_1,1),\ldots,(x_m,1)\in \cx_{n,K}\times\{0,1\}$ where the different $x_i$'s where chosen independently and uniformly from $\cx_{n,K}$.

The above idea alone, applied on the problem $\csp_{m(n)}^{\rand}(\sat_K)$ (or other problems of the form $\csp_{m(n)}^{\rand}(P)$), is still not enough to establish theorem \ref{thm:dnf_few_clauses}, due to the two following points:
\begin{itemize}
\item In the case that sample $(x_1,1),\ldots,(x_m,1)$ is random, it is, in a sense, ``very random". Yet, it is not scattered at all! Since all the labels are $1$, the constant function $1$ realizes the sample.
\item We must argue about the class $\dnf^{q(n)}$ rather than the class $\ch_{P}$.
\end{itemize}
Next, we explain how we address these two points.

\subsubsection*{Making the sample scattered}
To address the first point, we reduce $\csp^{\rand}_{n^d}(\sat_K)$ to a problem of the following form. For a predicate 
$P:\{\pm 1\}^K\to\{0,1\}$ we 
denote by $\csp(P,\neg P)$ the problem whose instances are collections, $J$, of constraints, each of which is either $P$ or $\neg P$ constraint, and the goal is to maximize the number of satisfied constraints.
Denote by
$\csp^{\rand}_{m(n)}(P,\neg P)$ the problem of distinguishing\footnote{As in $\csp^{\rand}_{m(n)}(P)$, in order to succeed, and algorithm must return ``satisfiable" w.p. $\ge\frac{3}{4}$ on every satisfiable formula and ``random" w.p. $\ge\frac{3}{4}$ on $1-o_n(1)$ fraction of the random formulas.} satisfiable from random formulas with $n$ variables and $m(n)$ constraints. Here, in a random formula, each constraint is chosen w.p. $\frac{1}{2}$ to be a uniform $P$ constraint and w.p. $\frac{1}{2}$ a uniform $\neg P$ constraint.

The advantage of the problem $\csp_{m(n)}^\rand(P,\neg P)$ is that in the ``learning formulation" from the previous section, it is the problem of distinguishing a realizable sample from a sample $(x_1,y_1),\ldots,(x_m,y_m)\in \cx_{n,K}\times\{0,1\}$ where the pairs $(x_i,y_i)$ where chosen at random, independently and uniformly. As explained in example \ref{examp:1}, this sample is $(\frac{1}{8}m(n),\frac{1}{4})$-scattered.

We will consider the predicate $T_{K,M}:\{0,1\}^{KM}\to\{0,1\}$ defined by
\[
T_{K,M}(z)=
\left(z_1\vee\ldots\vee z_K\right)\wedge
\left(z_{K+1}\vee\ldots\vee z_{2K}\right)
\wedge
\ldots
\wedge
\left(z_{(M-1)K+1}\vee\ldots\vee z_{MK}\right)~.
\]
We reduce the problem $\csp^{\rand}_{n^d}(\sat_K)$ to $\csp^{\rand}_{n^{d-1}}(T_{K,q(n)},\neg T_{K,q(n)})$. This is done in two steps. First, we reduce $\csp^{\rand}_{n^d}(\sat_K)$ to $\csp^{\rand}_{n^{d-1}}(T_{K,q(n)})$. This is done as follows. Given an instance $J=\{C_1,\ldots,C_{n^d}\}$ to $\csp(\sat_K)$, by a simple greedy procedure, we try to find $n^{d-1}$ disjoint subsets $J'_1,\ldots,J'_{n^{d-1}}\subset J$, such that for every $t$, $J'_t$ consists of $q(n)$ constraints and each variable appears in at most one of the constraints in $J'_t$. Now, from every $J'_t$ we construct $T_{K,q(n)}$-constraint that is the conjunction of all constraints in $J'_t$. As we show, if $J$ is random, this procedure will succeed w.h.p. and will produce a random $T_{K,q(n)}$-formula. If $J$ is satisfiable, this procedure will either fail or produce a satisfiable
$T_{K,q(n)}$-formula. 

The second step is to reduce $\csp^{\rand}_{n^{d-1}}(T_{K,q(n)})$ to $\csp^{\rand}_{n^{d-1}}(T_{K,q(n)},\neg T_{K,q(n)})$. This is done by replacing each constraint, w.p. $\frac{1}{2}$, with a random $\neg P$ constraint. Clearly, if the original instance is a random instance to $\csp^{\rand}_{n^{d-1}}(T_{K,q(n)})$, the produced instance is a random instance to $\csp^{\rand}_{n^{d-1}}(T_{K,q(n)},\neg T_{K,q(n)})$. Furthermore, if the original instance is satisfied by the assignment $\psi\in\{\pm 1\}^n$, the same $\psi$, w.h.p., will satisfy all the new constraints. The reason is that the predicate $\neg T_{K,q(n)}$ is positive on almost all inputs -- namely, on $1-\left(1-2^{-K}\right)^{q(n)}$ fraction of the inputs. Therefore the probability that a random $\neg T_{K,q(n)}$-constraint is satisfied by $\psi$ is $1-\left(1-2^{-K}\right)^{q(n)}$, and hence, the probability that all new constraints are satisfied by $\psi$ is $\ge 1-n^{d-1}\left(1-2^{-K}\right)^{q(n)}$. Now, since $q(n)=\omega(\log(n))$, the last probability is $1-o_n(1)$.

\subsubsection*{Reducing $\ch_{P}$ to $\dnf^{q(n)}$}
To address the second point, we will realize $\ch_{\neg T_{K,q(n)}}$ by the class $\dnf^{q(n)}$. 
More generally, we will show that for every predicate $P:\{\pm 1\}^K\to\{0,1\}$ expressible by a DNF formula with $T$ clauses, $\ch_P$ can be realized by DNF formulas with $T$ clauses (note that for $\neg T_{K,q(n)}$, $T=q(n)$).

We first note that hypotheses in $\ch_P$ are defined over signed $K$-tuples, while $\dnf$'s are defined over the boolean cube. To overcome that, we will construct an (efficiently computable) mapping $g:\cx_{n,K}\to \{\pm 1\}^{2Kn}$, and show that each $h\in\ch_P$ is of the form $h=h'\circ g$ for some DNF formula $h'$ with $T$ clauses and $2Kn$ variables. Besides ``fixing the domain", $g$ will have additional role -- we will choose an expressive $g$, which will help us to realize hypotheses in $\ch_P$. In a sense, $g$ will be a first layer of computation, that is the same for all $h\in\ch_P$ (and therefore we do not ``pay" for it).

We will 
group the coordinates of vectors in $\{\pm 1\}^{2Kn}$
into $2K$ groups, corresponding to $P$'s literals, and
index them by $[K]\times \{\pm 1\}\times[n]$. For $x=[(\alpha_1,i_1),\ldots,(\alpha_K,i_K)]\in \cx_{n,K}$, $g(x)$ will be the vector whose all coordinates are $1$, except that for $j\in [K]$, the $(j,-\alpha_j,i_j)$ coordinate is $-1$. 

Now, given $\psi\in\{\pm 1\}^n$, we show that $h_{\psi}:\cx_{n,K}\to \{0,1\}$ equals to $h\circ g$ for a DNF formula $h$ with $T$ clauses.
Indeed, suppose that $P(x)=C_1(x)\vee\ldots\vee C_T(x)$ is a DNF representation of $P$. It is enough to show that for every $C_r(z)=(-1)^{\beta_1}z_{j_1}\wedge\ldots\wedge (-1)^{\beta_l}z_{j_l}$ there is a conjunction of literals  $h_r:\{\pm 1\}^{2Kn}\to \{0,1\}$ such that for all $x=[(\alpha_1,i_1),\ldots,(\alpha_K,i_K)]\in \cx_{n,K}$, $h_r(g(x))=C_r(U_{x}(\psi))$.
To see that such $h_r$ exists, note that $C_r(U_{x}(\psi))=1$ if and only if, for every $1\le\tau\le l$, all the values in $g(x)$ in the coordinates of the form $(j_{\tau},\psi_i(-1)^{\beta_\tau},i)$ are $1$. 

\subsection{From $\csp_{n^d}^{\rand}(\sat_K)$ to $\csp_{n^{d-1}}^{\rand}(T_{K,\frac{n}{\log(n)}})$}
\begin{lemma}\label{lem:sat_to_T}
The problem $\csp_{n^d}^{\rand}(\sat_K)$ can be  reduced to $\csp_{n^{d-1}}^{\rand}(T_{K,M})$ for any $M\le \frac{n}{\log(n)}$.
\end{lemma}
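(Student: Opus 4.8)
The plan is to realize the reduction by the greedy ``bucketing'' procedure sketched above, made precise as follows. Given an instance $J=\{C_1,\dots,C_{n^d}\}$ of $\csp(\sat_K)$ on $n$ variables (with $d\ge 1$), split $C_1,\dots,C_{n^d}$ into $n^{d-1}$ consecutive blocks of $n$ constraints. Within a block, scan its constraints in order and greedily collect constraints that are pairwise variable-disjoint, stopping the moment $M$ of them have been collected; if the block runs out first, declare \emph{failure}. If all $n^{d-1}$ blocks succeed, the $M$ collected $\sat_K$-constraints of block $t$ use $KM$ distinct variables, so their conjunction is a $T_{K,M}$-constraint; let $J'$ be the $T_{K,M}$-formula consisting of these $n^{d-1}$ constraints. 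The reduction runs this procedure, outputs ``satisfiable'' on failure, and otherwise outputs $\ca'(J')$, where $\ca'$ is the assumed efficient algorithm for $\csp^{\rand}_{n^{d-1}}(T_{K,M})$. \emph{Completeness} is immediate: if $\psi$ satisfies every $C_i$ it satisfies every conjunction of $C_i$'s, so a produced $J'$ is satisfiable by $\psi$, and the reduction answers ``satisfiable'' in either branch.

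\textbf{The procedure rarely fails on random $J$.} I would argue this first. While fewer than $M$ constraints are collected in a block their variable set has size at most $K(M-1)$, so a fresh uniform $\sat_K$-constraint avoids it with probability at least $\bigl(1-\tfrac{KM}{n}\bigr)^K\ge\bigl(1-\tfrac{K}{\log n}\bigr)^K=:p_n$, and $p_n\to 1$ because $M\le n/\log n$. Coupling from below with i.i.d.\ $\mathrm{Ber}(p_n)$ trials, the probability a block fails is at most $\Pr[\mathrm{Bin}(n,p_n)<M]$, which is $e^{-\Omega(n)}$ since $np_n\ge n/2$ while $M=o(n)$ for large $n$; a union bound over the $n^{d-1}=\poly(n)$ blocks bounds the overall failure probability on a random $J$ by $o_n(1)$.

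\textbf{Conditioned on success, $J'$ is exactly a uniform $T_{K,M}$-formula.} Distinct blocks use disjoint sets of the i.i.d.\ constraints $C_i$, so the $n^{d-1}$ constraints of $J'$ are independent and I only need to analyze one block $D_1,\dots,D_n$. Condition on the collected indices being $1=\ell_1<\cdots<\ell_M$ ($\ell_1=1$ always). This event is ``(i) $D_{\ell_1},\dots,D_{\ell_M}$ pairwise variable-disjoint'' together with ``(ii) every scanned-but-discarded $D_j$ meets the union of the constraints collected before it''. By independence of the $D_i$, the conditional density of $(D_{\ell_1},\dots,D_{\ell_M})$ is proportional to $\mathbf{1}[\text{(i)}]\cdot\Pr[\text{(ii)}\mid D_{\ell_1},\dots,D_{\ell_M}]$; and on event (i) the union of the first $r$ collected constraints has size exactly $Kr$, so $\Pr[\text{(ii)}\mid\cdots]$ is a product of factors depending only on those counts $r$ --- a constant. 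Hence $(D_{\ell_1},\dots,D_{\ell_M})$ is uniform over pairwise-disjoint $M$-tuples of $\sat_K$-constraints, and concatenating $M$ signed $K$-tuples on disjoint index sets is exactly a uniform signed $KM$-tuple of distinct indices, i.e.\ a uniform $T_{K,M}$-constraint. Averaging over the collected-index sets, $J'$ given overall success is a uniform $T_{K,M}$-formula with $n^{d-1}$ constraints. Combining this with the first step and the guarantee of $\ca'$, the reduction outputs ``random'' with probability $\ge\tfrac34$ on a $1-o_n(1)$ fraction of random $J$, which together with completeness finishes the proof.

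\textbf{Main obstacle.} The delicate point is the uniformity claim of the last step: I must make sure that conditioning on which constraints the greedy scan discards does not bias the collected ones. This works precisely because each discarded constraint is only ever tested against a collected set whose \emph{size}, not identity, is determined by how many constraints have been collected so far; checking this ``size-only'' dependence carefully (together with the bookkeeping of the $K$-tuple encoding of constraints) is the part I would be most careful about.
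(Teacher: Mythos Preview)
Your reduction is precisely the one the paper uses, and your argument is correct. The two proofs differ in two places. First, for the failure-probability bound, the paper tracks the indicators that $D_r$ collides with \emph{any} of $D_1,\dots,D_{r-1}$ (not just the collected ones), restricts to the first $\lfloor n/(2K)\rfloor$ steps so that the total variable count is at most $n/2$, and then invokes the Linial--Luria strengthening of Chernoff to conclude that at least $n/\log n$ of those indicators are zero. Your coupling-with-$\mathrm{Ber}(p_n)$ argument is more direct: it uses only that the \emph{collected} set has size $<KM\le Kn/\log n$, so an ordinary Chernoff bound suffices and the auxiliary Linial--Luria theorem is not needed. Second, the paper dismisses the conditional-uniformity step with ``it is not hard to see that $J'$ is random as well,'' whereas you spell out the symmetry argument (the discard events depend on the collected tuple only through the sizes $Kr$, which are fixed on event (i)). Both differences are improvements: your concentration step is more elementary, and your uniformity argument makes explicit the one point the paper leaves to the reader --- exactly the point you flag as the main obstacle.
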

It will be convenient to use the following strengthening of Chernoff's bound, recently proved (with a {\em very} simple proof) by Linial and Luria~\cite{LinLur14}
\begin{theorem}\cite{LinLur14}\label{thm:chern}
Let $X_1\ldots,X_n$ be indicator random variables such that for every 
$S\subset [n]$, $\Pr\left(\forall i\in S,\;X_i=1\right)\le \alpha^{|S|}$. Then, for every $\beta>\alpha$,
\[
\Pr\left(\frac{1}{n}\sum_{i=1}^nX_i\ge \beta\right)\le \exp(-D(\beta||\alpha)n)\le \exp(-2(\beta-\alpha)^2n)
\]
\end{theorem}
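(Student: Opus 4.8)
The plan is to run the standard Chernoff exponential-moment argument, but with \emph{factorial} moments of $Y:=\sum_{i=1}^{n}X_i$ in place of ordinary moments, since the hypothesis is exactly a bound on factorial moments. The starting point is the identity
\[
\binom{Y}{k}\ =\ \sum_{S\subseteq[n],\ |S|=k}\ \prod_{i\in S}X_i ,
\]
which holds because the $X_i$ are $\{0,1\}$-valued: a $k$-subset $S$ contributes $1$ precisely when $X_i=1$ for every $i\in S$, and there are $\binom{Y}{k}$ such subsets. Taking expectations and applying the assumed bound to each $S$ in turn (legitimate since we only ever add nonnegative per-subset quantities, even though the events $\{X_i=1\}$ are positively correlated) gives
\[
\E\left[\binom{Y}{k}\right]\ \le\ \binom{n}{k}\,\alpha^{k},\qquad k=0,1,\ldots,n .
\]

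Next I would introduce, for a parameter $t>0$, the generating function $(1+t)^{Y}=\sum_{k\ge 0}\binom{Y}{k}t^{k}$; all terms are nonnegative, so the expectation may be taken term by term and $\E\left[(1+t)^{Y}\right]\le\sum_{k\ge 0}\binom{n}{k}(\alpha t)^{k}=(1+\alpha t)^{n}$. Since $Y$ is integer-valued, $\{Y\ge\beta n\}=\{Y\ge\lceil\beta n\rceil\}$ and $y\mapsto(1+t)^{y}$ is increasing, so Markov's inequality yields
\[
\Pr\left(\frac{1}{n}\sum_{i=1}^{n}X_i\ \ge\ \beta\right)\ \le\ \frac{\E\left[(1+t)^{Y}\right]}{(1+t)^{\beta n}}\ \le\ \left(\frac{1+\alpha t}{(1+t)^{\beta}}\right)^{n}.
\]
Then I would minimize the base over $t>0$: differentiating $\ln(1+\alpha t)-\beta\ln(1+t)$ and setting it to zero gives $t^{*}=\frac{\beta-\alpha}{\alpha(1-\beta)}$, which lies in $(0,\infty)$ exactly because $\alpha<\beta<1$ (the boundary case $\beta=1$ is immediate from the $k=n$ instance above). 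Substituting $1+\alpha t^{*}=\frac{1-\alpha}{1-\beta}$ and $1+t^{*}=\frac{\beta(1-\alpha)}{\alpha(1-\beta)}$ and simplifying, the exponent collapses to exactly $-D(\beta||\alpha)=-\bigl(\beta\ln\frac{\beta}{\alpha}+(1-\beta)\ln\frac{1-\beta}{1-\alpha}\bigr)$, which is the first claimed inequality. For the second inequality, $D(\beta||\alpha)\ge 2(\beta-\alpha)^{2}$, I would invoke Pinsker's inequality for two-point distributions, or simply observe that $D(\alpha||\alpha)=0$, $\frac{\partial}{\partial\beta}D(\beta||\alpha)\big|_{\beta=\alpha}=0$, and $\frac{\partial^{2}}{\partial\beta^{2}}D(\beta||\alpha)=\frac{1}{\beta(1-\beta)}\ge 4$, and integrate twice.

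I do not expect a genuine obstacle: the content is entirely in realizing that factorial moments, not ordinary moments, are the objects the hypothesis hands us, after which everything is routine. The points that do need care are the identity $\binom{Y}{k}=\sum_{|S|=k}\prod_{i\in S}X_i$ (which genuinely uses $X_i\in\{0,1\}$), the interchange of expectation with the series defining $(1+t)^{Y}$ (justified purely by nonnegativity of the terms), and the rounding of $\beta n$ inside the Markov step; the final collapse of the exponent to $-D(\beta||\alpha)$ is a short algebraic computation. One should also separately note the degenerate cases $\beta=1$ (handled above) and $\alpha=0$ (then $Y=0$ almost surely and there is nothing to prove).
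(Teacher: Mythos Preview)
The paper does not itself prove this theorem; it merely quotes it from \cite{LinLur14}, noting that the original proof is ``very simple.'' Your argument is correct and is precisely the Linial--Luria idea: the hypothesis is exactly a bound on the factorial moments via the identity $\binom{Y}{k}=\sum_{|S|=k}\prod_{i\in S}X_i$, giving $\E\bigl[\binom{Y}{k}\bigr]\le\binom{n}{k}\alpha^{k}$, after which the generating-function bound $\E[(1+t)^{Y}]\le(1+\alpha t)^{n}$, Markov, the optimization in $t$, and the Pinsker step are all standard and carried out correctly. There is nothing further in the present paper to compare against.
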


\begin{proof}
For simplicity, we assume that $M=\frac{n}{\log(n)}$.
Suppose toward a contradiction that $\csp_{n^{d-1}}^{\rand}(T_{K,\frac{n}{\log(n)}})$ can be efficiently solved using an algorithm $\ca$. Consider the following algorithm, $\ca'$, to $\csp_{n^d}^{\rand}(\sat_K)$. On the input $J=\{C_1,\ldots, C_{n^d}\}$,
\begin{enumerate}
\item Partition the constraints in $J$ into $n^{d-1}$ blocks, $\{C_{t+1},\ldots,C_{t+n}\},\;\;t=1,2,\ldots,n^{d-1}$.
\item For $t=1,\ldots,n^{d-1}$
\begin{enumerate}
\item Let $J'_t=\emptyset$.
\item For $r=1,\ldots,n$
\begin{enumerate}
\item If $|J'_t|<\frac{n}{\log(n)}$ and, for all $C\in J'_t$, the set variables appearing in $C_{t+r}$ is disjoint from the set of variables appearing in $C$, add $C_{t+r}$ to $J'_t$.
\end{enumerate}
\item \label{step:1} If $|J'_t|<\frac{n}{\log(n)}$, return ``satisfiable".
\item Let $C'_t$ be the $T_{K,\lceil\frac{n}{\log(n)}\rceil}$-constraint which is the conjunction of all the constraints in $J'_t$.
\end{enumerate}
\item \label{step:2} Run $\ca$ on the instance $J'=\{C'_1,\ldots, C'_{n^{d-1}}\}$ and return the same answer as $\ca$.
\end{enumerate}
Next, we reach a contradiction as we prove that $\ca'$ solves the problem $\csp_{n^{d}}^{\rand}(\sat_K)$. 
First, suppose that the input, $J$, is satisfiable. Then, either $\ca'$ will return ``satisfiable" in step \ref{step:1} or, will run $\ca$ on $J'$. It is not hard to see that $J'$ is satisfiable as well, and therefore, $\ca$ (and therefore $\ca'$) will return ``satisfiable" w.p. $\ge \frac{3}{4}$.

Suppose now that $J$ is random. First, we claim that $\ca'$ will reach \ref{step:2} w.p. $\ge 1-o_n(1)$. Indeed, we will show that for large enough $n$ and any fixed $t$, the probability of exiting at step \ref{step:1} is $\le \exp\left(-\left(\frac{1}{2^{2K+5}K}\right)^2n\right)$, from which it follows that the probability of exiting at step \ref{step:1} for some $t$ is $o_n(1)$.
To show that, let $X_r,\;r=1,\ldots,n$ be the indicator r.v. that is $1$ if and only if one of the variables appearing in $C_{t+r}$ also appears in one of $C_{t+1},\ldots,C_{t+r-1}$. Denote also $\bar{X}_r=1-X_r$

Let $n'=\lfloor\frac{n}{2K}\rfloor$.
It is enough to show that $\sum_{r=1}^{n'}\bar{X}_r\ge \frac{n}{\log(n)}$ w.p. $\ge 1- \exp\left(-\left(\frac{1}{2^{2K+5}K}\right)^2n\right)$. Indeed, for every fixed $r\in [n']$, since the number of variables appearing in 
$C_{t+1},\ldots,C_{t+r-1}$ is $\le \frac{n}{2}$, the probability that $X_r=1$ is $\le 1-2^{-K}$, even if we condition on  $X_1,\ldots,X_{r-1}$. Hence, the probability that any fixed $u$ variables out of $X_1,\ldots,X_{n'}$ are all $1$ is $\le \left(1-2^{-K}\right)^u$. By theorem \ref{thm:chern},
\[
\Pr\left(\frac{1}{n'}\sum_{i=1}^{n'} X_i\ge 1-2^{-K}+2^{-{K+1}}\right)\le \exp\left(-2\left(2^{-{K+1}}\right)^2n'\right)\le \exp\left(-\left(\frac{1}{2^{2K+5}K}\right)^2n\right)~.
\]
It follows that w.p. $\ge1- \exp\left(-\left(\frac{1}{2^{2K+5}K}\right)^2n\right)$, $\sum_{r=1}^{n'}\bar{X}_r\ge \frac{n'}{2^{K+1}}$, and the claim follows as for sufficiently large $n$, $\frac{n'}{2^{K+1}}\ge \frac{n}{\log(n)}$.
Finally, it is not hard to see that, conditioning on the event that the algorithm reaches step \ref{step:2}, $J'$ is random as well, and therefore w.p. $\ge 1-o_n(1)$ over the choice of $J$, $\ca$ (and therefore $\ca'$) will return ``random" w.p. $\ge \frac{3}{4}$ over its internal randomness.

\end{proof}

\subsection{From $\csp_{n^{d}}^{\rand}(T_{K,M})$ to $\csp_{n^{d}}^{\rand}(T_{K,M},\neg T_{K,M})$}

\begin{lemma}\label{lem:T_to_double_T}
For any fixed $K$ and $M\ge 2^{K+2}\cdot\log(m(n))$, the problem $\csp_{m(n)}^{\rand}(T_{K,M})$ can be efficiently reduced to the problem $\csp_{m(n)}^{\rand}(T_{K,M},\neg T_{K,M})$
\end{lemma}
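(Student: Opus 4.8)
The plan is to reduce $\csp_{m(n)}^{\rand}(T_{K,M})$ to $\csp_{m(n)}^{\rand}(T_{K,M},\neg T_{K,M})$ by the transformation sketched in the overview: given an input formula $J=\{C_1,\dots,C_{m(n)}\}$ for $\csp(T_{K,M})$, produce a formula $J'$ for $\csp(T_{K,M},\neg T_{K,M})$ by independently flipping a fair coin for each index $i$; if it comes up heads leave $C_i$ as is, and if tails replace $C_i$ by a freshly sampled uniformly random $\neg T_{K,M}$-constraint. Then run the assumed efficient algorithm $\ca$ for $\csp_{m(n)}^{\rand}(T_{K,M},\neg T_{K,M})$ on $J'$ and return its answer. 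The number of variables and constraints is preserved, and the reduction is clearly efficient, so it remains to verify the two completeness/soundness conditions.

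First I would handle the random case, which is immediate: if $J$ is a uniformly random $T_{K,M}$-formula, then each retained $C_i$ is a uniform $T_{K,M}$-constraint and each replaced one is a uniform $\neg T_{K,M}$-constraint, with the choice of type being an independent fair coin, so $J'$ is exactly distributed as a random instance of $\csp(T_{K,M},\neg T_{K,M})$. Hence with probability $1-o_n(1)$ over $J'$, $\ca$ outputs ``random'' with probability $\ge 3/4$, and since $J'$ is a deterministic (randomized) function of $J$ whose output distribution matches, the same holds with probability $1-o_n(1)$ over $J$ together with the reduction's internal coins — a routine averaging argument lets us conclude that for a $1-o_n(1)$ fraction of $J$, the combined probability (over reduction coins and $\ca$'s coins) of outputting ``random'' is, say, $\ge 2/3$, which can be boosted back above $3/4$ by a constant number of independent repetitions if one wants to match the definition exactly.

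Next I would handle the satisfiable case. Suppose $\psi\in\{\pm1\}^n$ satisfies $J$. The key point is that $\neg T_{K,M}$ is satisfied by an assignment on almost all of its signed $K$-tuples: $T_{K,M}$ is a conjunction of $M$ independent $K$-clauses, each satisfied by $\psi$ with probability $1-2^{-K}$ over a uniformly random signed $K$-tuple, so a uniformly random $T_{K,M}$-constraint is satisfied by $\psi$ with probability $(1-2^{-K})^M$, and therefore a uniformly random $\neg T_{K,M}$-constraint is violated by $\psi$ with probability exactly $(1-2^{-K})^M$. By a union bound over the at most $m(n)$ replaced constraints, the probability that $\psi$ fails to satisfy some new constraint is at most $m(n)\,(1-2^{-K})^M \le m(n)\,e^{-2^{-K}M}$. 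Plugging in $M\ge 2^{K+2}\log(m(n))$ gives $m(n)\cdot e^{-4\log m(n)} = m(n)^{-3}=o_n(1)$ (using $1-x\le e^{-x}$; one should also note $m(n)\to\infty$, or handle bounded $m$ trivially). So with probability $1-o_n(1)$ over the reduction's coins, $J'$ is satisfiable (by the very same $\psi$), and then $\ca$ outputs ``satisfiable'' with probability $\ge 3/4$; combining, the reduction outputs ``satisfiable'' with probability $\ge 3/4 - o_n(1)$, again boostable to $\ge 3/4$ by repetition.

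The only mild obstacle is bookkeeping the two sources of randomness (the reduction's coins and $\ca$'s coins) so that the $3/4$ thresholds in the definition of $\csp^{\rand}$ are met on the nose rather than up to an $o_n(1)$ slack; this is a standard amplification-by-repetition argument and I would either absorb it by running the reduction a constant number of times and taking a majority vote, or simply remark that the definition is robust to such constant changes. Everything else is a direct computation, with the inequality $M\ge 2^{K+2}\log(m(n))$ chosen precisely so that $m(n)(1-2^{-K})^M=o_n(1)$.
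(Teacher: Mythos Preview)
Your proposal is correct and follows essentially the same approach as the paper: the same coin-flip reduction, the same argument that a random $J$ maps to a random $J'$, and the same union bound showing that a satisfying assignment $\psi$ survives all the new $\neg T_{K,M}$ constraints with probability at least $1-m(n)(1-2^{-K})^M$. The only cosmetic differences are that the paper bounds $m(n)(1-2^{-K})^M\le 1/m(n)$ via an explicit logarithm computation (whereas you use $1-x\le e^{-x}$ to get $m(n)^{-3}$), and the paper does not spell out the amplification-by-repetition remark you include about matching the $3/4$ thresholds exactly.
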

\begin{proof}
Given an instance $J=\{C_1,\ldots,C_m\}$ to $\csp(T_{K,M})$, the reduction will generate an instance to $\csp(T_{K,M},\neg T_{K,M})$ as follows. For each $C_i$, w.p. $\frac{1}{2}$, we substitute $C_i$ by a random $\neg T_{K,M}$ constraint. Clearly, if $J$ is a random formula, then the produced formula is a valid random formula to $\csp_{m(n)}^{\rand}(T_{K,M},\neg T_{K,M})$. It remains to show that if $J$ 
is satisfiable, then so is $J'$. Indeed, let $\psi\in \{\pm 1\}^n$ be a satisfying assignment to $J$. It is enough to show that w.p. $\ge \frac{1}{m(n)}$ $\psi$ 
satisfies all the new $\neg T_{K,M}$-
constraints. However, since $\left|\left(\neg T_{K,M}\right)^{-1}(0)\right|
=\left(2^K-1\right)^{M}
=\left(1-2^{-K}\right)^{M}\cdot 2^{MK}$, the probability that a single random constraint is not satisfied is $ \left(1-2^{-K}\right)^{M}$. It follows that the probability that one of the random $\neg T_{K,M}$ constraints in $J'$ is not satisfiable by $\psi$ is $\le m(n) \left(1-2^{-K}\right)^{M}$. Finally, we have $m(n) \left(1-2^{-K}\right)^{M}\le \frac{1}{m(n)}$ since,
\begin{eqnarray*}
\log\left(m(n)\left(1-2^{-K}\right)^{M}\right)&=&\log(m(n))-M\log\left(\frac{1}{1-2^{-K}}\right)
\\
&=&\log(m(n))-M\log\left(1+\frac{2^{-K}}{1-2^{-K}}\right)
\\
&\le&\log(m(n))-M\frac{2^{-K}}{1-2^{-K}}
\\
&\le&\log(m(n))-M2^{-{(K+1)}}
\\
&\le&\log(m(n))-2\log(m(n))=\log\left(\frac{1}{m(n)}\right)
\end{eqnarray*}
\end{proof}

\subsection{From $\csp_{n^{d}}^{\rand}(T_{K,M},\neg T_{K,M})$ to DNF's}
\begin{lemma}\label{lem:double_T_to_DNF}
Suppose that $P:\{\pm 1\}^K\to \{0,1\}$ can be realized a $\dnf$ formula with $T$ clauses. Then $\ch_P$ can be efficiently realized\footnote{That is, there is an efficiently computable $g:\cx_{n,K}\to \{\pm 1\}^{2Kn}$ for which each $h\in \ch_P$ is of the form $h=h'\circ g$ for some DNF formula $g$ with $T$ clauses and $2Kn$ variables.} by the class of $\dnf$ formulas with $T$ clauses and $2Kn$ variables.
\end{lemma}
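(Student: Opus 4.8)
The plan is to carry out the two-part construction outlined in the overview. First I would fix, independently of the hypothesis, the encoding map $g\colon\cx_{n,K}\to\{\pm1\}^{2Kn}$: index the $2Kn$ coordinates by $[K]\times\{\pm1\}\times[n]$ (think of $2K$ blocks, two per literal-slot of $P$), and for $x=[(\alpha_1,i_1),\ldots,(\alpha_K,i_K)]$ let $g(x)$ be the all-ones vector with the single exception that the coordinate $(j,-\alpha_j,i_j)$ equals $-1$ for each $j\in[K]$. This $g$ is clearly computable in time $\poly(n)$, so by the (footnoted) definition of efficient realizability it remains to show that each $h_\psi\in\ch_P$ factors as $h_\psi=h'\circ g$ for some $\dnf$ formula $h'$ with $T$ clauses over these $2Kn$ variables.

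Fix $\psi\in\{\pm1\}^n$ and a $\dnf$ representation $P(z)=C_1(z)\vee\cdots\vee C_T(z)$. Since $h_\psi(x)=P(U_x(\psi))$, it suffices to produce, for each clause $C_r$, one conjunction of literals $h_r$ over the $2Kn$ coordinates with $h_r(g(x))=C_r(U_x(\psi))$ for every $x$; then $h':=h_1\vee\cdots\vee h_T$ is the required $T$-clause $\dnf$. Write the (cleaned-up) clause $C_r$ as the conjunction of the literals asserting $z_{j_\tau}=(-1)^{\beta_\tau}$ for $\tau=1,\ldots,l$, with distinct $j_\tau\in[K]$ and $\beta_\tau\in\{0,1\}$ (a clause containing a contradictory pair of literals is identically $0$ and can be realized by an identically false conjunction such as $w\wedge\neg w$; the empty clause is realized by the empty conjunction). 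I would then take $h_r$ to be the conjunction, over all $\tau\in[l]$ and all $i\in[n]$, of the positive literal on the coordinate $(j_\tau,\psi_i(-1)^{\beta_\tau},i)$.

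The remaining verification is just sign bookkeeping. Look at a coordinate $(j_\tau,\psi_i(-1)^{\beta_\tau},i)$ occurring in $h_r$: if $i\ne i_{j_\tau}$ then in block $j_\tau$ the $-1$ of $g(x)$ sits at the active index $i_{j_\tau}\ne i$, so this coordinate of $g(x)$ is $1$ and the literal is vacuous; if $i=i_{j_\tau}$ then the coordinate is $-1$ exactly when $\psi_{i_{j_\tau}}(-1)^{\beta_\tau}=-\alpha_{j_\tau}$, so the literal holds iff $\psi_{i_{j_\tau}}(-1)^{\beta_\tau}=\alpha_{j_\tau}$, i.e.\ iff $\alpha_{j_\tau}\psi_{i_{j_\tau}}=(-1)^{\beta_\tau}$, i.e.\ iff $\big(U_x(\psi)\big)_{j_\tau}=(-1)^{\beta_\tau}$. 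Hence $h_r(g(x))=1$ iff $\big(U_x(\psi)\big)_{j_\tau}=(-1)^{\beta_\tau}$ for all $\tau$, which is precisely $C_r(U_x(\psi))=1$. Disjoining over $r$ gives $h'(g(x))=\bigvee_r C_r(U_x(\psi))=P(U_x(\psi))=h_\psi(x)$, with $h'$ a $\dnf$ of $T$ clauses on $2Kn$ variables, finishing the proof.

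There is no serious obstacle here; the content of the lemma is really the choice of encoding. The single point that has to be handled with care, and the reason $g$ is built this way, is that a \emph{fixed} conjunction $h_r$ has to reproduce $C_r$ uniformly over all tuples $x$, even though it does not ``know'' which variable index $i_{j_\tau}$ the $j_\tau$-th literal of the constraint refers to. The trick is to let $h_r$ test the coordinate $(j_\tau,\psi_i(-1)^{\beta_\tau},i)$ for \emph{every} $i$; because $g$ plants a $-1$ in each block only at that block's active index, all these tests are automatically satisfied except the one at $i=i_{j_\tau}$, which encodes exactly the bit of information the clause needs. The only thing left to check is that the degenerate clauses (repeated or contradictory literals, or the empty clause) are still realizable without inflating the clause count, which is immediate.
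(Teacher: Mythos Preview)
Your proposal is correct and follows essentially the same route as the paper: the encoding $g$ you describe is exactly the paper's (coordinates indexed by $[K]\times\{\pm1\}\times[n]$ with $g_{j,b,i}(x)=-1$ iff $x(j)=(-b,i)$), and your clause $h_r=\bigwedge_{\tau,i}(\text{coordinate }(j_\tau,\psi_i(-1)^{\beta_\tau},i))$ coincides with the paper's $h=\bigvee_t\bigwedge_{r,i}x_{j_{t,r},\psi_ib_{t,r},i}$. Your sign-bookkeeping verification is the same chain of equivalences the paper writes out, just unpacked a bit more; the extra remark about degenerate clauses is harmless bookkeeping the paper leaves implicit.
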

\proof
The realization is defined by the function $g:\cx_{n,K}\to \{\pm 1\}^{2Kn}$, defined as follows. We will index the coordinates of vectors in $\{\pm 1\}^{2Kn}$ by $[K]\times \{\pm 1\}\times[n]$ and let
\[
g_{j,b,i}(x)=
\begin{cases}
-1 & x(j)=(-b,i)
\\
1 & \text{otherwise}
\end{cases}~~.
\]
To see that $g$ indeed defines a realization of $\ch_P$ by the class of $\dnf$ formulas with $T$ clauses, we must show that for any assignment $\psi\in \{\pm 1\}^n$, $h_\psi=h\circ g$ for some 
$\dnf$ formula $h$ with $T$ clauses.

Indeed, write $P(z_1,\ldots,z_K)=\vee_{t=1}^T\wedge_{r=1}^{R_{t}}b_{t,r}z_{j_{t,r}}$ for $b_{t,r}\in \{\pm 1\}$ and $i_{t,r}\in [K]$.  Now consider the formula $h:\{\pm 1\}^{2Kn}\to \{0,1\}$ defined by
\[
h(x)=\vee_{t=1}^T\wedge_{r=1}^{R_t}\wedge_{i=1}^{n}x_{j_{t,r},\psi_{i}b_{t,r},i}~.
\]
Now, for $x\in \cx_{n,K}$ we have,
\begin{align*}
h(g(x))=1 ~~&\iff \exists t\in[T]\,\forall r\in[R_t], i\in [n],\;
g_{j_{t,r},\psi_{i}b_{t,r},i}(x)=1
\\
&\iff  \exists t\in[T]\,\forall r\in[R_t], i\in [n],\;x(j_{t,r})\ne(-\psi_{i}b_{t,r},i)
\\
&\iff \exists t\in[T]\,\forall r\in[R_t],\; \;x_1(j_{t,r})\ne -\psi_{x_2(j_{t,r})}b_{t,r}
\\
&\iff \exists t\in[T]\,\forall r\in[R_t],\; x_1(j_{t,r})\psi_{x_2(j_{t,r})}=b_{t,r}
\\
&\iff h_{\psi}(x)=x(\psi)=P(x_1(1)\psi_{x_2(1)},\ldots,x_1(K)\psi_{x_2(K)})=1  ~.
\end{align*}
\proofbox

\subsection{Wrapping up -- concluding theorem \ref{thm:dnf_few_clauses}}
We are now ready to conclude the proof.
Let $q:\mathbb N\to\mathbb N$ be any function such that $q(n)=\omega(\log(n))$. W.l.o.g., we assume that $q(n)=O\left(\log^2(n)\right)$. 
By theorem \ref{thm:basic_realizable} it is enough to show that for every $d$, it is hard to distinguish samples that are realizable by $\dnf^{q(n)}$ and $\left(n^{d},1/4\right)$-scattered samples.

By assumption \ref{hyp:only_sat}, there is $K$ such that $\csp_{n^{d+2}}^{\rand}(\sat_K)$ is hard. 
Denote $q'(n)=q(2Kn)$.
By lemma \ref{lem:sat_to_T}, the problem $\csp_{n^{d+1}}^{\rand}(T_{K,q'(n)})$ is hard. By lemma \ref{lem:T_to_double_T}, the problem $\csp_{n^{d+1}}^{\rand}(T_{K,q'(n)},\neg T_{K,q'(n)})$ is hard. Now, since $\neg T_{K,q'(n)}$ can be realized by a DNF formula with $q'(n)$ clauses, by lemma \ref{lem:double_T_to_DNF}, the problem $\csp_{n^{d+1}}^{\rand}(T_{K,q'(n)},\neg T_{K,q'(n)})$ can be reduced to a problem of distinguishing samples that are realizable by a DNF formula with $2Kn$ variables and $q'(n)$ clauses, from $\left(\frac{1}{8}n^{d+1},1/4\right)$-scattered samples. Changing variables (i.e., replacing $2Kn$ with $n'$), we conclude that it is hard to distinguish samples that are realizable by $\dnf^{q\left(n\right)}$ from $\left(\frac{1}{8(2K)^{d-1}}n^{d+1},1/4\right)$-scattered samples, which are in particular $\left(n^{d},1/4\right)$-scattered. The theorem follows.

\section{Proof theorem \ref{thm:auto}}\label{sec:auto}
For a function $q:\mathbb N\to\mathbb N$, we let $\auto_{q(n)}$ be the
class of functions $h:\{\pm 1\}^n\to \{0,1\}$ that can be realized by
a finite automaton with $\le q(n)$ states. We will show that $\dnf^{n}$ can be efficiently realized to $\auto_{2n+1}$. By theorem \ref{thm:dnf_few_clauses} it follows that learning $\auto_{2n+1}$ is RSAT-hard. By a simple scaling argument (see \cite{daniely2013average}), the same conclusion holds for $\auto_{n^{\epsilon}}$.

The realization is very simple, and is given by the mapping $g:\{\pm \}^n\to \{\pm 1\}^{n^2}$, where $g(x)$ is simply $n$ consecutive copies of $x$. Let $T:\{\pm 1\}^n\to\{0,1\}$ be a function that can be realized by a $\dnf$ formula consisting of $n$ clauses. We must show that the is an automaton $A:\{\pm 1\}^{n^2}\to\{0,1\}$, with $\le 2n^2+1$ states, such that $T=A\circ g$.

The automaton $A$ will have an accepting sink, and two states to each input variable. Given $z=g(x)$, the automaton will first go over the first $n$ inputs, and will check whether there is violation of the first clause in the $\dnf$ representation of $T$. This can be easily done with two states for each variable -- the first state indicate that there was no violation till this state, while the second will indicate the opposite. After going over the first $n$ inputs, if there was no violation, the automaton will jump to the accepting sink. Otherwise, it will continue the next $n$ variables, and will similarly check whether there is a violation of the second clause in the $\dnf$ representation of $T$. Again, if there was no violation, the automaton will jump to the accepting sink, and otherwise, will continue the the next $n$ inputs, checking the third clause. If, after checking all the $n$ clauses, it turns out that all clauses are violated, the automaton will reject.

\section{Proof of theorem \ref{thm:basic_realizable} \cite{daniely2013average}}\label{sec:proof_thm_basic}
Let $\ch$ be the hypothesis class in question and suppose toward a contradiction that algorithm $\cl$ learns $\ch$ efficiently. Let $M\left(n,1/\epsilon,1/\delta\right)$ be the maximal number of random bits used by $\cl$ when it run on the input $n,\epsilon,\delta$. This includes both the bits describing the examples produced by the oracle and ``standard" random bits. Since $\cl$ is efficient, $M\left(n,1/\epsilon,1/\delta\right)< \poly(n,1/\epsilon, 1/\delta)$. Define
\[
q(n)=M\left(n,1/\beta,4\right)+n~.
\]
By assumption, there is a $(q(n),\beta)$-scattered ensemble $\cd$
for which it is hard to distinguish a $\cd$-random sample from a
realizable sample. Consider the algorithm $\ca$ defined below. On input $S\in\cz_n^{m(n)}$,
\begin{enumerate}
\item Run $\cl$ with parameters $n,\beta$ and $\frac{1}{4}$, such that the examples' oracle generates examples by choosing a random example from $S$.
\item Let $h$ be the hypothesis that $\cl$ returns. If $\Err_S(h)\le \beta$, output $\text{``realizable"}$. Otherwise, output $\text{``unrealizable"}$.
\end{enumerate}
Next, we derive a contradiction by showing that $\ca$ distinguishes a realizable sample from a $\cd$-random sample. Indeed, if the input $S$ is realizable, then $\cl$ is guaranteed to return, with probability $\ge 1-\frac{1}{4}$, a hypothesis $h:\cx_n\to\{0,1\}$ with $\Err_{S}(h)\le \beta$. Therefore, w.p. $\ge \frac{3}{4}$ $\ca$ will output ``realizable".

What if the input sample $S$ is drawn from $\cd^{m(n)}_n$? Let
$\cg\subset \{0,1\}^{\cx_n}$ be the collection of functions that $\cl$
might return when run with parameters $n,\epsilon(n)$ and
$\frac{1}{4}$. We note that $|\cg |\le 2^{q(n)-n}$, since  each
hypothesis in $\cg$ can be described by $q(n)-n$ bits. Namely, the
random bits that $\cl$ uses and the description of the examples
sampled by the oracle. Now, since $\cd$ is
$(q(n),\beta)$-scattered, the probability that $\Err_{S}(h)\le
\beta$ for some $h\in \cg$ is at most $|\cg|2^{-q(n)}\le
2^{-n}$. It follows that the probability that $\ca$ responds
``realizable" is $\le 2^{-n}$. This leads to the desired contradiction
and concludes our proof.

\section{Open questions}\label{sec:future}
An obvious direction for future work is to establish more lower bounds. We list below some basic learning problems that we are unable to resolve even 
under the random $K$-SAT assumption.
\begin{enumerate}
\item Learning decision trees.
\item Learning intersections of a constantly many halfspaces. It is worth noting that no known algorithm can learn even intersections of $2$ halfspaces.
\item Agnostically Learning halfspaces with a constant approximation ratio. 
We note that the last problem was shown hard under the much stronger assumption of \cite{daniely2013average}.
\end{enumerate}
In addition, as discussed in \cite{daniely2013average}, our work and \cite{daniely2013average} have connections to several TCS areas, including hardness of approximation, cryptography, refutation algorithms and average case complexity.

\paragraph{Acknowledgements:}
Amit Daniely is a recipient of the Google Europe Fellowship in Learning Theory, and this research is supported in part by this Google Fellowship. Shai Shalev-Shwartz is supported by the Israeli Science Foundation grant number 590-10. 
We thank Uri Feige, Guy Kindler and Nati Linial for valuable discussions.

\bibliography{bib}

\end{document}